\documentclass[runningheads]{llncs}
\usepackage[T1]{fontenc}
\usepackage{bbding}
\usepackage{graphicx}
\usepackage{xcolor}
\usepackage{hyperref}
\usepackage{url}
\usepackage{tabularx}

\usepackage{algorithm}
\usepackage{algorithmicx} 
\usepackage{algpseudocode}

\usepackage{amsmath,amsfonts,bm}

\def\eqref#1{equation~\ref{#1}}

\def\1{\bm{1}}

\DeclareMathAlphabet{\mathsfit}{\encodingdefault}{\sfdefault}{m}{sl}
\SetMathAlphabet{\mathsfit}{bold}{\encodingdefault}{\sfdefault}{bx}{n}

\newcommand{\Kset}{\mathcal{K}}

\newcommand{\duoE}{\mathbb{E}}

\newcommand{\Real}{\mathbb{R}}
\begin{document}
\title{Adversarial Bandit Optimization with Globally Bounded Perturbations to Convex Losses}
%
%

\author{Zhuoyu Cheng \Envelope\inst{1,3}\orcidID{0009-0002-6631-4929} \and
Kohei Hatano\inst{2,3}\orcidID{0000-0002-1536-1269} \and
Eiji Takimoto\inst{2}\orcidID{0000-0001-9542-2553}}
\authorrunning{Z. Cheng et al.}
%
\institute{Joint Graduate School of Mathematics for Innovation, Kyushu University, Japan  \and
Department of Informatics, Kyushu University, Japan \and
RIKEN AIP, Japan \\
\noindent\Envelope\email{cheng.zhuoyu.874@s.kyushu-u.ac.jp} \\
\email{\{hatano,eiji\}@inf.kyushu-u.ac.jp}}
\maketitle              
\begin{abstract}
We study adversarial bandit optimization in which the loss functions may be non-convex and non-smooth. In each round, the learner selects an action and observes only the loss incurred at that action. The loss consists of an underlying convex and $\beta$-smooth component and an adversarial perturbation that may be chosen after observing the learner's action. The perturbations are subject to a global budget controlling their cumulative magnitude over time.

This framework extends the globally budgeted, post-action perturbation model from underlying linear losses to general convex and $\beta$-smooth losses. For this broader class, we establish expected regret guarantees that explicitly characterize the effect of the perturbation budget.

To establish these guarantees, we modify a standard bandit optimization algorithm and develop an analysis that controls the additional regret caused by the perturbations.
In the absence of perturbations, our results reduce to regret guarantees for the standard bandit convex optimization setting with $\beta$-smooth losses.

\keywords{Bandit optimization, self-concordant barrier, Convex losses, Globally bounded perturbations}
\end{abstract}
\section{Introduction}\label{sec1}
Online convex optimization provides a fundamental framework for sequential
decision making under uncertainty, where a learner repeatedly chooses actions
from a convex decision set and incurs losses chosen by an adversary
\cite{shalev2012online,zinkevich2003online}. 
In many applications, however, the learner does not observe the entire loss
function or its gradient. Instead, only the loss value at the selected action is
revealed. This limited-feedback setting, known as bandit convex optimization
(BCO), captures decision-making problems in which querying the system is costly
or only realized outcomes are observable, such as online pricing, resource
allocation, and black-box optimization. 
Since the seminal works on continuum-armed bandits
and bandit convex optimization
\cite{kleinberg2004nearly,flaxman2005online},
regret minimization under convexity assumptions has been extensively studied.
Subsequent work has developed improved regret bounds under additional structural assumptions,
such as linearity, Lipschitz continuity, and smoothness.
\cite{abernethy2008competing,hazan2014bandit,saha2011improved,dekel2015bandit,bubeck2021kernel}.

A central assumption in much of the BCO literature is that each loss function is convex. This assumption is mathematically powerful: it makes randomized smoothing and gradient estimation compatible with mirror-descent-type updates in regret analysis. Nevertheless, in practical bandit optimization problems, the observed loss may deviate from an ideal convex model. Such deviations can arise from measurement errors, delayed or context-dependent effects, model misspecification, or adversarial perturbations added to an otherwise structured loss. As a result, the actual loss sequence may fail to be convex, even when it remains close to a sequence of convex and smooth functions. Existing regret guarantees for convex or smooth convex bandit optimization therefore do not directly apply to this setting, because even small nonconvex perturbations can invalidate the convexity-based arguments used in the analysis.

Motivated by this gap, we study bandit optimization with approximately convex
and smooth loss sequences. Specifically, we assume that each loss function can be
decomposed as
\[
    f_t(x) = h_t(x) + \sigma_t(x),
\]
where \(h_t\) is convex, \(\beta\)-smooth, while
\(\sigma_t\) represents an arbitrary perturbation. Rather than requiring the
perturbation to be uniformly small at every point and every round, we impose a
global budget condition:
\[
    \sup_{x_1,\ldots,x_T \in \mathcal K}
    \sum_{t=1}^T |\sigma_t(x_t)| \le C.
\]
This condition allows the observed losses \(f_t\) themselves to be nonconvex and
nonsmooth, while still requiring the cumulative deviation from an underlying
convex and smooth sequence to be controlled. Thus, the model interpolates
between standard smooth convex bandit optimization, recovered when \(C=0\), and
more general perturbed loss sequences when \(C>0\).

The main challenge in this setting is that the learner receives only scalar bandit feedback from the perturbed loss \(f_t(x_t)\), rather than from the underlying convex component \(h_t\). Consequently, the learner cannot directly construct an unbiased gradient estimator for \(h_t\), nor can it identify the perturbation \(\sigma_t\). The regret analysis must therefore disentangle the contribution of the structured convex components from the additional error caused by the perturbations. This separation is nontrivial: the benchmark is defined with respect to the cumulative perturbed losses, whereas the convexity-based analysis can only be applied to the underlying smooth convex components, with the perturbations handled as additional error terms.

In this paper, we study bandit optimization with $C$-approximately convex and
$\beta$-smooth function sequences. Although the observed losses may be
nonconvex and nonsmooth, they are assumed to remain close, in a cumulative
sense, to a sequence of smooth convex losses. Under this global perturbation
budget assumption, we show that sublinear expected regret can still be achieved.

Our contributions are summarized as follows.
\begin{itemize}
\item We introduce a bandit optimization model with
$C$-approximately convex and $\beta$-smooth function sequences. This
model extends standard bandit convex optimization by allowing nonconvex and
nonsmooth losses whose cumulative deviation from smooth convex losses is
controlled.

\item We propose a modification of the SCRiBLe-based bandit algorithm for
this perturbed loss model. The algorithm operates on a shrunk feasible set
and uses a scaled Dikin ellipsoid sampling scheme to handle one-point
bandit feedback.

\item We establish an expected regret bound for the proposed algorithm. The
bound separates the standard contribution from the smooth convex components
and the additional error induced by the global perturbation budget $C$,
thereby making the dependence on $C$ explicit.

\end{itemize}
\section{Related Work}

\paragraph{Bandit convex optimization.}
Bandit convex optimization studies online decision making with convex losses
under limited feedback, where the learner observes only the value of the loss at
the chosen action. Flaxman et al.~\cite{flaxman2005online} introduced a
randomized smoothing method and obtained an expected regret bound of order
\(O(T^{3/4})\) for adversarial convex losses with one-point feedback.
Subsequent work improved the dependence on the time horizon and dimension under
additional assumptions or for special loss classes. For example, Abernethy
et al.~\cite{abernethy2008competing} studied bandit linear optimization using
self-concordant barriers, while Hazan and Levy~\cite{hazan2014bandit} developed
algorithms toward near-optimal regret bounds for bandit convex optimization.
Kernel-based methods were later used to obtain improved guarantees for bandit
convex optimization over general convex bodies~\cite{bubeck2021kernel}. These
works assume that the observed loss functions are convex, whereas our setting
allows the observed losses to be nonconvex and nonsmooth due to bounded
cumulative perturbations.

\paragraph{Smooth convex losses and function-value feedback.}
Smoothness assumptions have also been used to improve regret guarantees in
bandit convex optimization. Saha and Tewari~\cite{saha2011improved} studied
online smooth convex optimization with bandit feedback and obtained improved
regret guarantees compared with the general nonsmooth convex case. Dekel
et al.~\cite{dekel2015bandit} further analyzed bandit smooth convex
optimization by refining the bias--variance tradeoff in gradient estimation.
Stronger guarantees are possible when the learner can query each loss at more
than one point. Agarwal et al.~\cite{agarwal2010optimal} studied multi-point
bandit feedback, and Shamir~\cite{shamir2017optimal} gave optimal guarantees for
bandit convex optimization with two-point feedback. These works
are complementary to ours: they exploit exact convexity, smoothness, or stronger
feedback models, while we study one-point bandit feedback when the observed
losses are only approximately smooth and convex.

\paragraph{Stochastic, nonconvex, and perturbed bandit models.}
Our setting is different from classical stochastic bandit models, where losses
or rewards are typically generated from fixed distributions and the objective is
to learn actions with favorable expected performance
\cite{bubeck2012regret,lattimore2020bandit}. We instead consider an adversarial
loss sequence, and the perturbations in our model need not be independent or
zero-mean. Our work is also distinct from fully general nonconvex bandit
optimization. In that setting, standard static regret is often too strong, and
existing work typically considers alternative criteria such as local regret,
dynamic regret, or stationarity-based measures
\cite{heliou2020online,guan2023online}. Closest to our work is the study of
approximately linear losses by Cheng et al.
\cite{cheng2025approximately,cheng2026adversarial}, where each loss is a linear
function plus a bounded perturbation. In contrast, we consider approximately
smooth convex loss sequences, where the structured component is a general
smooth convex function.

\begin{table}[t]
\centering
\caption{Comparison with related bandit optimization settings. Constants and
dimension factors are omitted.}
\label{tab:comparison}
\scriptsize
\setlength{\tabcolsep}{4pt}
\renewcommand{\arraystretch}{1.15}
\begin{tabularx}{\linewidth}{|>{\raggedright\arraybackslash}p{0.38\linewidth}|>{\centering\arraybackslash}p{0.18\linewidth}|>{\raggedright\arraybackslash}X|}
\hline
\textbf{Loss class} & \textbf{Feedback} & \textbf{Regret} \\
\hline
Convex Lipschitz \cite{flaxman2005online}
& One-point
& \(\widetilde{\mathcal{O}}(T^{3/4})\) \\
\hline
Smooth convex \cite{saha2011improved,dekel2015bandit}
& One-point
& \(\widetilde{\mathcal{O}}(T^{2/3})\) \\
\hline

\hline
Convex \cite{agarwal2010optimal,shamir2017optimal}
& Two-/multi-point
& \(\widetilde{\mathcal{O}}(\sqrt{T})\);
  \(\Omega(\sqrt{T})\) lower bound \cite{bubeck2015bandit} \\
\hline

Approximately linear \cite{cheng2025approximately,cheng2026adversarial}
& One-point
& \(\widetilde{\mathcal{O}}(\sqrt{T}+\sqrt{CT})\) \\
\hline
This work: Approximately smooth convex
& One-point
& \(\widetilde{\mathcal{O}}(T^{2/3}+\sqrt{C}T^{2/3})\) \\
\hline
\end{tabularx}
\end{table}

\section{Preliminaries}
This section introduces the necessary notation and presents the key definitions used throughout the paper. Then we give our problem setting.

\subsection{Notation}
We abbreviate the $2$-norm $\| \cdot \|_2$ as $\|\cdot\|$. 
For a twice differentiable convex function $\mathcal{R}: \Real^d \to \Real$ and 
any $x, h\in \Real^d$, let  
$\| h \|_{x}=
\| h \|_{\nabla^{2}\mathcal{R}(x)}=\sqrt{h^{\top}\nabla^{2}\mathcal{R}(x)h}$, 
and 
$\| h \|_{x}^{*}=
\| h \|_{(\nabla^{2}\mathcal{R}(x))^{-1}}=\sqrt{h^{\top}(\nabla^{2}\mathcal{R}(x))^{-1}h}$, 
respectively.

Let
$
\mathcal{K}
=
\{x \in \mathbb{R}^d : \|x\|_2 \le D\}
$
be the Euclidean ball centered at the origin with radius \(D \ge 1\).
For any \(\delta \in (0,1)\), define the shrunk set
$
\mathcal{K}_{\delta}
=
\left\{
x \in \mathbb{R}^{d}
\;\middle|\;
\frac{1}{1-\delta}x \in \mathcal{K}
\right\}.
$
Let $\mathbb{S}^d_{1}=\{x\mid\| x \| = 1\}$ and $\mathbb{B}^d_{1}=\{x\mid\| x \| \leq 1\}$.
$\mathbb{E}_t[\cdot]$ denotes the conditional expectation
given the history up to round $t-1$.

\subsection{$\nu$-self-concordant barrier}
We next introduce the notion of a $\nu$-self-concordant barrier, which provides a convenient way to characterize the geometry of the feasible set, particularly near its boundary. The Hessian of the barrier induces a local metric that will be used in the subsequent analysis and algorithmic construction. We first recall the relevant definition.

\begin{definition}
    \label{def2}
     Let $\mathcal{K}\subseteq\Real^{d}$ be a convex set with a nonempty interior $int(\mathcal{K})$. A function $\mathcal{R}:\operatorname{int}(\mathcal K) \to \Real$ is called a $\nu$-self-concordant barrier on $\mathcal{K}$ if 
    \begin{enumerate}
        \item $\mathcal{R}$ is three times continuously differentiable and convex and approaches infinity along any sequence of points approaching the boundary of $\mathcal{K}$. 
        \item For every $h\in\Real^{d}$ and $x\in int(\mathcal{K})$ the following holds: 
    \begin{equation}
        |\sum_{i=1}^{d}\sum_{j=1}^{d}\sum_{k=1}^{d}\frac{\partial^{3}\mathcal{R}(x)}{\partial x_{i}\partial x_{j}\partial x_{k}}h_{i}h_{j}h_{k}|\leq 2 \| h \|^{3}_{x},
    \end{equation}
    \begin{equation}
        |\nabla \mathcal{R}(x)^{\top}h |\leq \sqrt{\nu}\| h \|_{x},
    \end{equation}

    \end{enumerate}
\end{definition}

\subsection{Convexity, Smoothness, and Lipschitz Continuity}

In this paper, we frequently use the notions of convexity,
smoothness, and Lipschitz continuity.
For completeness, we briefly recall their definitions.

\begin{definition}[Convex Function]
A function
$f : \mathcal{K} \to \mathbb{R}$
defined on a convex set
$\mathcal{K} \subseteq \mathbb{R}^d$
is called convex if for all
$x,y \in \mathcal{K}$
and all
$\lambda \in [0,1]$,
\[
f(\lambda x + (1-\lambda)y)
\le
\lambda f(x) + (1-\lambda)f(y).
\]
Equivalently, if $f$ is differentiable, then
\[
f(y)
\ge
f(x)
+
\nabla f(x)^\top (y-x),
\qquad
\forall x,y \in \mathcal{K}.
\]
\end{definition}

\begin{definition}[$\beta$-smooth Function]
A differentiable function
$f : \mathcal{K} \to \mathbb{R}$
is called
$\beta$-smooth
if for all
$x,y \in \mathcal{K}$,
\[
f(y)
\le
f(x)
+
\nabla f(x)^\top (y-x)
+
\frac{\beta}{2}\|x-y\|_2^2.
\]
\end{definition}

\begin{definition}[$G$-Lipschitz Continuity]
A function
$f : \mathcal{K} \to \mathbb{R}$
is said to satisfy
$G$-Lipschitz continuity
if for all
$x,y \in \mathcal{K}$,
\[
|f(x)-f(y)|
\le
G\|x-y\|_2.
\]
\end{definition}

\subsection{Approximately Convex and Smooth Function Sequences}

We now introduce the main function class considered in this paper, namely,
approximately convex and smooth function sequences.

\begin{definition}[$C$-approximately convex and $\beta$-smooth function sequence]
\label{def1}
A sequence of functions $\{f_t\}_{t=1}^T$ with
$f_t : \mathcal{K} \to \mathbb{R}$
is called
\emph{$C$-approximately convex and $\beta$-smooth}
if there exists a sequence of convex and $\beta$-smooth functions
$\{h_t\}_{t=1}^T$ with
$h_t : \mathcal{K} \to \mathbb{R}$
such that
\[
\sup_{x_1,\ldots,x_T \in \mathcal{K}}
\sum_{t=1}^T
\left|
f_t(x_t)-h_t(x_t)
\right|
\le C.
\]
\end{definition}

\noindent
For convenience, define
\[
\sigma_t(x)
=
f_t(x)-h_t(x).
\]
Then,
\[
\sup_{x_1,\ldots,x_T\in\mathcal K}
\sum_{t=1}^T
|\sigma_t(x_t)|
\le C.
\]

This definition allows the perturbation functions $\sigma_t$ to be
nonconvex and nonsmooth, without imposing any stochastic assumptions on
them. Consequently, the loss functions $f_t$ themselves may also be
nonconvex or nonsmooth. The parameter $C$ quantifies the cumulative
deviation of the sequence $\{f_t\}_{t=1}^T$ from the underlying sequence
of convex and $\beta$-smooth functions $\{h_t\}_{t=1}^T$. In particular,
when $C=0$, we have $f_t=h_t$ for every $t$, and the setting reduces to
the standard case of convex and $\beta$-smooth loss functions.

\subsection{Problem Setting}

We consider an online bandit optimization problem over the Euclidean ball
$\Kset\subseteq\mathbb{R}^d$ centered at the origin. The loss at round $t$ is of the form
$
f_t(x)=h_t(x)+\sigma_t(x),
$
where $h_t:\Kset\to\mathbb{R}$ is convex and $\beta$-smooth with respect to the Euclidean norm. We further assume that $h_t$ is $G$-Lipschitz Continuity.

The sequence $\{h_t\}_{t=1}^T$ is selected by an oblivious adversary
before the interaction begins. 
In contrast, the perturbation function $\sigma_t$ may be selected
adaptively after the adversary observes the player's current action
$x_t$. Despite this adaptivity, the cumulative bound on the
perturbations in Definition~\ref{def1} is required to hold pathwise and
uniformly over all $z_1,\ldots,z_T\in\Kset$, not merely along the
realized action sequence $x_1,\ldots,x_T$.
We further assume that the resulting losses are uniformly bounded:
\[
|f_t(x)|\leq L,
\qquad
\forall t\in[T],\quad \forall x\in\Kset.
\]
Before the interaction begins, an oblivious adversary selects a sequence $\{h_t\}_{t=1}^T$ of convex and $\beta$-smooth functions. The sequence is fixed in advance but is not revealed to the player. At each round $t\in[T]$: \begin{enumerate} 
    \item The player selects an action $x_t\in\Kset$. 
    
    \item After observing $x_t$, the adversary selects a perturbation function $\sigma_t:\Kset\to\mathbb{R}$, subject to the cumulative bound specified above. 
    
    \item The player incurs the loss $ f_t(x_t) = h_t(x_t)+\sigma_t(x_t) $ and observes only the scalar feedback $f_t(x_t)$. 
\end{enumerate}

The performance of the player is measured by the regret with respect to
the best fixed action in hindsight:
\[
\operatorname{Reg}_T
:=
\sum_{t=1}^T f_t(x_t)
-
\min_{x\in\Kset}
\sum_{t=1}^T f_t(x).
\]
The goal is to design a bandit algorithm whose regret is sublinear in
$T$.

\section{Main Results}
In this section, we first introduce SCRiBLe \cite{abernethy2008competing}, followed by presenting the main contributions of this paper with detailed explanations.

\subsection[SCRiBLe]{SCRiBLe \cite{abernethy2008competing}}

Bandit linear optimization (BLO) can be viewed as a special case of bandit
convex optimization (BCO), in which each loss function is linear, namely,
$
f_t(x)=\theta_t^\top x
$
for some loss vector $\theta_t\in\mathbb{R}^d$. In the more general BCO setting,
the loss functions may be nonlinear and convex, and the learner observes only
the scalar value of the loss at the played point. Nevertheless, SCRiBLe\cite{abernethy2008competing} provides a useful foundation for our BCO algorithm,
as its local exploration scheme and bandit gradient estimator can be extended
from linear losses to smooth convex losses.
Motivated by this connection, we briefly revisit SCRiBLe\cite{abernethy2008competing}, a fundamental algorithm for bandit linear
optimization.

SCRiBLe utilizes a $\nu$-self-concordant barrier $\mathcal{R}$ defined on the
action set $\mathcal{K}$.
At each round $t$, the player selects a point $x_t \in \mathcal{K}$, while the
actual played point is
$
y_t = x_t + \mathbf{A}_t \mu_t,
$
where
$
\mathbf{A}_t = [\nabla^2 \mathcal{R}(x_t)]^{-1/2},
$
and $\mu_t$ is sampled uniformly at random from the $d$-dimensional unit sphere
$\mathbb{S}_1^d$.
The learner then observes the scalar bandit feedback
$
f_t(y_t) := \theta_t^\top y_t,
$
where $\theta_t \in \mathbb{R}^d$ is the loss vector at round $t$.
Using only the scalar feedback $f_t(y_t)$, SCRiBLe constructs the unbiased estimator
$
g_t = d f_t(y_t)\mathbf{A}_t^{-1}\mu_t
$
of the loss vector $\theta_t$.
The algorithm maintains a sequence $\{x_t\}_{t=1}^T \subset \mathcal{K}$ updated by
$
x_{t+1}
=
\arg\min_{x \in \mathcal{K}}
\left\{
\eta \sum_{\tau=1}^t g_\tau^\top x + \mathcal{R}(x)
\right\},
$
where $\eta$ is the learning rate.

\begin{algorithm}[h]
    \caption{Shrunk and Scaled SCRiBLe (SS-SCRiBLe)}
    \label{alg1}
    \begin{algorithmic}[1]
        \Require
        
        $T$, parameters $\eta\in\Real, \delta\in(0, 2/3]$, $b\in(0, 1]$,
        and a $2$-self-concordant barrier $\mathcal{R}(x)=-\log(D^2-\|x\|^2)$.
        \State Initialize: $x_1=\arg\min\limits_{x\in\mathcal{K_{\delta}}}{\mathcal{R}(x)}$
        \For{$t=1,..,T$}
            \State let $\mathbf{A}_{t}=b[\nabla^2\mathcal{R}(x_{t})]^{-\frac{1}{2}}$ \State  Draw $\mu_{t}$ from $\mathbb{S}_{1}^{d}$ uniformly, set $y_t=x_{t}+\mathbf{A}_{t}\mu_{t}$.  
            \State Play $y_{t}$, observe and incur loss $f_{t}(y_{t})$. Let $g_{t}=df_{t}(y_{t})\mathbf{A}_{t}^{-1}\mu_{t}$.
            \State Update $x_{t+1}=\arg\min\limits_{x\in\mathcal{K_{\delta}}}{\eta\sum_{\tau=1}^{t}g_{\tau}^{\top}x+\mathcal{R}(x)}$
        \EndFor
    \end{algorithmic}
\end{algorithm}
For the decision set $\mathcal{K}$, we apply Shrunk and Scaled SCRiBLe
(SS-SCRiBLe), presented in Algorithm~\ref{alg1}, to a sequence of
$C$-approximately convex and $\beta$-smooth loss functions.

\paragraph{Difference from SCRiBLe.}
The proposed SS-SCRiBLe algorithm is based on the SCRiBLe framework,
which exploits the local geometry induced by a self-concordant barrier for
random exploration. The key difference is that SS-SCRiBLe updates the
internal point over a shrunk feasible set $\mathcal{K}_{\delta}$, rather than
over the original set $\mathcal{K}$:
$x_{t+1}
=
\arg\min_{x\in\mathcal{K}_{\delta}}
\left\{
\eta\sum_{\tau=1}^{t} g_{\tau}^{\top}x+\mathcal{R}(x)
\right\}$.
It also uses a scaled exploration matrix
$\mathbf{A}_t
=
b[\nabla^2\mathcal{R}(x_t)]^{-1/2},
b\in(0,1)$,
so that the played action
$y_t=x_t+\mathbf{A}_t\mu_t$
remains inside $\mathcal{K}$; see Lemma~\ref{lemm:normal-property}.
Thus, compared with the original SCRiBLe algorithm, SS-SCRiBLe makes both
the update and the exploration step more conservative.

This modification is necessary because the original SCRiBLe analysis relies on
the linearity of the losses. When $f_t(y_t)=\theta_t^\top y_t$, the estimator
$g_t$ is unbiased for $\theta_t$. In our setting, even in the absence of the additive perturbation \(\sigma_t\),
a nonlinear smooth convex loss $h_t$ does not yield an unbiased estimator of
$\nabla h_t(x_t)$. Moreover, when
$f_t=h_t+\sigma_t$, the estimator
$g_t
=
d(h_t(y_t)+\sigma_t(y_t))\mathbf{A}_t^{-1}\mu_t$
contains an additional perturbation-dependent component. Hence the regret
analysis must control error terms caused by both the nonlinearity of $h_t$ and
the perturbation $\sigma_t$.

The perturbation term also creates additional difficulties in the regret
analysis. Even when the cumulative perturbation is bounded, its effect may be
amplified by the local geometry of the self-concordant barrier. If $x_t$
approaches the boundary of $\mathcal{K}$, the Hessian
$\nabla^2\mathcal{R}(x_t)$ may become large
\cite{nemirovski2004interior}, making the perturbation-dependent errors hard to
control uniformly. Together, the shrinkage and scaling
modifications allow us to control the interaction between the local barrier
geometry, the nonlinear losses, and the perturbation terms in the regret
analysis.

We now present our main result, an expected regret bound for the problem.

\begin{theorem}
\label{Theo:expected-regret}
Let $\delta$ be any number in $(0, 2/3]$, $w\in\arg\min\limits_{x \in \mathcal{K}} \sum_{t=1}^{T} f_t(x)$, and set $\gamma = \max\left\{\delta,\frac{1}{T}\right\}$. 
Then SS-SCRiBLe guarantees the following expected regret bound:
\begin{equation}
\begin{aligned}
\mathbb{E}\!\left[\sum_{t=1}^{T}f_t(y_t)
-
\sum_{t=1}^{T} f_t(w)\right]
&\leq
\frac{b^{2}\beta D^2T}{2}
+2\eta b^{-2} Td^2L^2
+\frac{2\log\frac{1}{\gamma}}{\eta}
\\
&\quad
+d b^{-1}C(4+4\sqrt{2})
\left(\frac{1}{\delta}-1\right)
+\gamma TGD
+2C.
\end{aligned}
\end{equation}
Moreover, if
$
\delta = \frac{\sqrt{CdD}}{\sqrt{T}}
\leq \frac{2}{3}$, setting $ 
\eta = T^{-\frac{2}{3}}d^{-1}D^{-1}L^{-1}
\sqrt{\log \frac{1}{\gamma}},
b = T^{-\frac{1}{6}}D^{-1},
$
we obtain
\begin{equation}
 \begin{aligned}
\mathbb{E}\!\left[
    \sum_{t=1}^{T} f_t(y_t)
    -
    \sum_{t=1}^{T} f_t(w)
\right]
\leq{}&
\frac{\beta T^{2/3}}{2}
+4T^{2/3}dDL\sqrt{\log T}+
(4+4\sqrt{2})T^{2/3}\sqrt{CdD}
\\
&
+\max\{1,\sqrt{TCdD}\}\,GD
+2C.
\end{aligned}
\end{equation}
\end{theorem}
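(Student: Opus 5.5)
We split the regret into the terms listed in the bound. Let $w_\gamma=(1-\gamma)w\in\mathcal{K}_\delta$, which is valid since $\gamma\ge\delta$. We write
\[
\sum_t f_t(y_t)-\sum_t f_t(w)
=\sum_t\bigl[h_t(y_t)-h_t(w)\bigr]+\sum_t\sigma_t(y_t)-\sum_t\sigma_t(w).
\]
The two $\sigma$ sums are bounded pathwise by $2C$ using the global budget of Definition~\ref{def1}, applied once to $(y_1,\dots,y_T)$ and once to the constant sequence $(w,\dots,w)$. Next we compare to $w_\gamma$: the $G$-Lipschitz property gives $\sum_t[h_t(w_\gamma)-h_t(w)]\le\gamma T G D$. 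For the smoothing step we use $\beta$-smoothness and $\mathbb{E}_t[\mathbf{A}_t\mu_t]=0$. This gives $\mathbb{E}_t h_t(y_t)\le h_t(x_t)+\frac{\beta}{2}\|\mathbf{A}_t\mu_t\|^2$, and $\|\mathbf{A}_t\mu_t\|^2\le b^2\lambda_{\max}([\nabla^2\mathcal{R}(x_t)]^{-1})\le b^2D^2$ for the ball barrier, whose Hessian dominates $2I/D^2$. The smoothing cost is therefore $b^2\beta D^2T/2$.

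It remains to bound $\mathbb{E}\sum_t[h_t(x_t)-h_t(w_\gamma)]$. Let $\hat h_t(x)=\mathbb{E}_{v\in\mathbb{B}^d_1}h_t(x+\mathbf{A}_tv)$ be the ellipsoidal smoothing, which is convex. Its gradient at $x_t$ equals $\mathbb{E}_t[d\,h_t(y_t)\mathbf{A}_t^{-1}\mu_t]$. Since $h_t$ is convex, $\hat h_t\ge h_t$, and smoothness gives $\hat h_t\le h_t+\frac{\beta}{2}b^2D^2$. (The $\hat h_t(x_t)$ versus $h_t(x_t)$ gap can be absorbed into the smoothing cost, possibly after adjusting the constant.) By convexity of $\hat h_t$,
\[
\hat h_t(x_t)-\hat h_t(w_\gamma)\le \mathbb{E}_t[g_t]^\top(x_t-w_\gamma)-\mathbb{E}_t\bigl[d\,\sigma_t(y_t)\mathbf{A}_t^{-1}\mu_t\bigr]^\top(x_t-w_\gamma).
\]
The first piece is handled by the standard FTRL/self-concordant-barrier analysis over $\mathcal{K}_\delta$:
\[
\sum_t g_t^\top(x_t-w_\gamma)\le \frac{\mathcal{R}(w_\gamma)-\mathcal{R}(x_1)}{\eta}+2\eta\sum_t(\|g_t\|_{x_t}^*)^2 .
\]
With $\nu=2$, the standard barrier estimate $\mathcal{R}(w_\gamma)-\mathcal{R}(x_1)\le\nu\log\frac{1}{\gamma}$ gives $\frac{2\log(1/\gamma)}{\eta}$. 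Using $|f_t|\le L$ and $\mathbf{A}_t^{-1}=b^{-1}[\nabla^2\mathcal{R}]^{1/2}$, we get $\|g_t\|_{x_t}^*=d|f_t(y_t)|\,b^{-1}\|\mu_t\|\le dLb^{-1}$, so the variance term is $2\eta b^{-2}Td^2L^2$. This step requires $\eta\|g_t\|^*_{x_t}\le 1/2$, which we verify for the chosen parameters.

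The main obstacle is the perturbation bias term $\sum_t d\,|\sigma_t(y_t)|\,|\mu_t^\top\mathbf{A}_t^{-1}(x_t-w_\gamma)|$. We bound it by $d\,b^{-1}\sum_t|\sigma_t(y_t)|\,\|x_t-w_\gamma\|_{x_t}$ and then need a uniform bound on the local norm $\|x_t-w_\gamma\|_{x_t}$ for $x_t,w_\gamma\in\mathcal{K}_\delta$; this is exactly where the shrinkage is used. For $\mathcal{R}=-\log(D^2-\|x\|^2)$ we have
\[
\nabla^2\mathcal{R}(x)=\frac{2I}{D^2-\|x\|^2}+\frac{4xx^\top}{(D^2-\|x\|^2)^2}.
\]
On $\mathcal{K}_\delta$ we have $D^2-\|x\|^2\ge D^2\delta(2-\delta)$. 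Bounding each Hessian term with $\|x_t-w_\gamma\|\le 2D$ gives
\[
\|x_t-w_\gamma\|_{x_t}\le \frac{2\sqrt2}{\sqrt{\delta(2-\delta)}}+\frac{4(1-\delta)}{\delta(2-\delta)},
\]
which I expect to tidy into $(4+4\sqrt2)(\frac1\delta-1)$ using $\delta\le2/3$; this is where the constraint on $\delta$ enters. The expectation conditioning must also be handled carefully, since $\sigma_t$ is chosen after $y_t$, so we take absolute values pathwise before expectations. The global budget then gives the term $d\,b^{-1}C(4+4\sqrt2)(\frac1\delta-1)$. Summing all pieces yields the first bound.

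For the second bound we substitute the stated $\delta,\eta,b$. Then $b^2D^2=T^{-1/3}$ gives $\beta T^{2/3}/2$. The two $\eta$ terms give at most $4T^{2/3}dDL\sqrt{\log T}$, using $\log(1/\gamma)\le\log T$. Next, $db^{-1}C/\delta=T^{2/3}\sqrt{CdD}$. Finally, $\gamma TGD\le\max\{1,\sqrt{TCdD}\}GD$.
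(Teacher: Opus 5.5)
Your proposal is correct up to a factor-of-2 slip, fixed below. It follows the paper's skeleton almost step for step: the $h_t/\sigma_t$ split with the $2C$ budget term, the Lipschitz shift to $w_\gamma=(1-\gamma)w$, the convex smoothing $\hat h_t$ with $\nabla\hat h_t(x_t)=\mathbb{E}_t[g_t]-M_t$, the FTRL/local-norm argument giving $2\eta b^{-2}Td^2L^2+2\log(1/\gamma)/\eta$, and the bias bound $d b^{-1}\,\mathbb{E}_t|\sigma_t(y_t)|\,\|x_t-w_\gamma\|_{x_t}$.

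The genuinely different ingredient is the uniform bound on $\|x_t-w_\gamma\|_{x_t}$. The paper uses Nemirovski's inequality $\|h\|_x\le(\nu+2\sqrt{\nu})\,p_x(h)$ (Lemma~\ref{lemm:dikin}) together with a central-symmetry argument showing $p_x(y-x)\le 2(1-\delta)/\delta$ on $\mathcal{K}_\delta$. That is where $\delta\le 2/3$ is needed (to make $r\ge 1$), and the argument works for any centrally symmetric body and any $\nu$-self-concordant barrier. You instead plug into the explicit Hessian of $-\log(D^2-\|x\|^2)$, which is elementary and self-contained but specific to the ball.

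The step you only ``expect'' does go through. Multiplying your bound by $\delta$ gives $\frac{2\sqrt{2}\sqrt{\delta}}{\sqrt{2-\delta}}+\frac{4(1-\delta)}{2-\delta}$. This is nondecreasing on $(0,2/3]$ and equals $3$ at $\delta=2/3$, whereas $(4+4\sqrt{2})(1-\delta)\ge(4+4\sqrt{2})/3>3$ there. Better still, use $\|x_t-w_\gamma\|\le 2(1-\delta)D$ (both points lie in $\mathcal{K}_\delta$) and $\sqrt{\delta(2-\delta)}\ge\delta$. This gives $\|x_t-w_\gamma\|_{x_t}\le(4+2\sqrt{2})(\frac{1}{\delta}-1)$ for every $\delta\in(0,1)$, so your route yields a sharper constant and removes the restriction $\delta\le 2/3$. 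Your use of the budget, $\mathbb{E}\sum_t|\sigma_t(y_t)|\le C$ via the tower rule and the pathwise constraint, is also cleaner than the paper's $\sum_t\max_y|\sigma_t(y)|$ step, given that $\sigma_t$ depends on $y_t$.

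The slip is in the smoothing bookkeeping. The smoothing gap that actually costs something in $h_t(x_t)-h_t(w_\gamma)$ is $\hat h_t(w_\gamma)-h_t(w_\gamma)$. The gap at $x_t$, namely $h_t(x_t)-\hat h_t(x_t)$, is $\le 0$ by Jensen. With your estimate $\|\mathbf{A}_t\mu_t\|^2\le b^2D^2$, the two smoothing terms would total $b^2\beta D^2T$, twice the stated constant, so ``adjusting the constant'' would not give the theorem as stated.

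The domination $\nabla^2\mathcal{R}(x)\succeq 2I/D^2$ that you cite already fixes this: it gives $\lambda_{\max}(\mathbf{A}_t^2)\le b^2D^2/2$. Each of the two terms is then at most $\beta b^2D^2T/4$, and their sum is exactly $b^2\beta D^2T/2$. This is how the paper obtains it via Lemma~\ref{lemm:barrier-hessian-inverse-bound}.
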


\section{Technical Lemmas and Proof of the Main Theorem}
This section introduces several essential lemmas and presents the proofs of the main theorems.
Omitted proofs are shown in the arxiv version\footnote{https://arxiv.org/abs/2606.19891}.

\subsection{Useful lemmas}
In addition to the properties of the $\nu$-self-concordant barrier and its related lemmas, we further present several auxiliary lemmas.

\begin{lemma}[\cite{nemirovski2004interior}]
\label{lemm:normal-property}
    If $\mathcal{R}$ is a $\nu$-self-concordant barrier on $\mathcal{K}$, then the Dikin ellipsoid centered at $x\in int(\mathcal{K})$, defined as $\{y: \| y-x\|_{x}\leq 1\}$, is always within $\mathcal{K}$. 
\end{lemma}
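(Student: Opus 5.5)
This is the classical Dikin ellipsoid property of self-concordant functions, cited from \cite{nemirovski2004interior}. The plan is to prove it by a one-dimensional reduction along segments: we show that the barrier stays finite along every segment starting at $x$ with local length less than $1$, which rules out reaching the boundary. Fix $x\in\operatorname{int}(\mathcal{K})$ and a direction $h$ with $\|h\|_x<1$. Define $\phi(s)=\mathcal{R}(x+sh)$ on the maximal interval $[0,s^*)$ where $x+sh\in\operatorname{int}(\mathcal{K})$, and set $q(s)=\phi''(s)=\|h\|_{x+sh}^2$. It suffices to show $s^*>1$. Then $x+h\in\operatorname{int}(\mathcal{K})$ for every $\|h\|_x<1$, and taking closure (since $\mathcal{K}$ is closed) gives the whole ellipsoid $\{y:\|y-x\|_x\le 1\}$.

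The key step is a differential inequality. The self-concordance condition (the cubic bound in Definition~\ref{def2}) gives $|\phi'''(s)|\le 2\phi''(s)^{3/2}$, that is, $|q'|\le 2q^{3/2}$. Wherever $q>0$ this yields $|\tfrac{d}{ds}q^{-1/2}|\le 1$, so
\[
q(s)^{-1/2}\ge q(0)^{-1/2}-s=\|h\|_x^{-1}-s .
\]
For $s<1/\|h\|_x$ we therefore get $q(s)\le (\|h\|_x^{-1}-s)^{-2}$, so $\phi''$ stays bounded on every compact subinterval of $[0,\min\{s^*,1/\|h\|_x\})$. If $q$ vanishes somewhere, uniqueness for the inequality (or a perturbation argument) shows $q\equiv 0$ along the ray, and the bound holds trivially. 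The degenerate case $h=0$ is immediate.

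Next we integrate twice. Suppose $s^*\le 1<1/\|h\|_x$. Then $\phi''$ is bounded on $[0,s^*)$, so $\phi'$ and hence $\phi$ stay bounded as $s\to s^*$. But $x+s^*h\in\partial\mathcal{K}$, and the barrier property in item 1 of Definition~\ref{def2} forces $\phi(s)\to\infty$. This contradiction gives $s^*>1$, so $x+h\in\operatorname{int}(\mathcal{K})$. Note that the $\nu$-bound on $\nabla\mathcal{R}$ is not even needed here.

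I expect the main obstacle to be rigor in the ODE comparison step: handling points where $q$ might vanish, and making sure the bound holds on the whole maximal interval rather than only locally. For our concrete barrier $\mathcal{R}(x)=-\log(D^2-\|x\|^2)$ we could avoid this by direct computation. Here $\nabla^2\mathcal{R}(x)\succeq \frac{2}{D^2-\|x\|^2}I$, so $\|y-x\|_x\le1$ implies $\|y-x\|^2\le (D^2-\|x\|^2)/2$. Then $\|y\|\le\|x\|+\sqrt{(D^2-\|x\|^2)/2}\le D$ by an elementary one-variable maximization in $\|x\|$. I would present this concrete check as a backup.
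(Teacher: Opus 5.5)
Your main argument is correct. It is the standard textbook proof of this fact, as in Nesterov's lectures \cite{nesterov2018lectures}. The paper itself gives no proof and only cites \cite{nemirovski2004interior}, so there is nothing to compare against beyond that.

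The individual steps all hold:
\begin{itemize}
\item the comparison $|(q^{-1/2})'|\le 1$;
\item the Gronwall step: on compact subintervals $|q'|\le 2\sqrt{M}\,q$, so $q$ either vanishes identically or never vanishes on $[0,s^*)$;
\item the uniform bound $q\le(\|h\|_x^{-1}-1)^{-2}$ when $s^*\le 1$;
\item the contradiction with the blow-up property of the barrier.
\end{itemize}
Closedness of $\mathcal{K}$, which holds for the ball used in this paper, is exactly what lets you pass from the open to the closed ellipsoid.

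Do not present your backup check, though, because as written it is false. It discards the rank-one part of $\nabla^2\mathcal{R}(x)=\frac{2}{D^2-\|x\|^2}I+\frac{4xx^\top}{(D^2-\|x\|^2)^2}$, which leaves only $\|y-x\|\le\sqrt{(D^2-\|x\|^2)/2}$. But $\|x\|+\sqrt{(D^2-\|x\|^2)/2}\le D$ holds only when $\|x\|\le D/3$. At $\|x\|=D\sqrt{2/3}$ it equals $D\sqrt{3/2}>D$, so the one-variable maximization gives the wrong answer.

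The radial direction is controlled precisely by the term you dropped, so a correct direct check must keep it. Set $s=D^2-\|x\|^2$, $a=x^\top(y-x)/s$ and $b=\|y-x\|^2/s$. Then the condition $\|y-x\|_x\le 1$ reads $2b+4a^2\le 1$. Hence $\|y\|^2-D^2=s(2a+b-1)\le -\tfrac{s}{2}(2a-1)^2\le 0$.
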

By Lemma~\ref{lemm:normal-property}, the played action $y_t$ in
SS-SCRiBLe remains in $\mathcal{K}$ whenever $b\leq 1$.

\begin{lemma}[\cite{hazan2016introduction}]
\label{lemm:normal-log-property}
        Let $\mathcal{R}$ be a $\nu$-self-concordant barrier over $\mathcal{K}$, then for all $x, z\in int(\mathcal{K}): \mathcal{R}(z)-\mathcal{R}(x)\leq \nu \ln \frac{1}{1-\pi_{x}(z)}$, where $\pi_{x}(z)=\inf\{t\geq 0 : x+t^{-1}(z-x)\in\mathcal{K}\}$.
\end{lemma}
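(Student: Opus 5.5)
The statement to be proved immediately above is Lemma~\ref{lemm:normal-log-property}: for a $\nu$-self-concordant barrier $\mathcal{R}$ and $x,z\in\operatorname{int}(\mathcal K)$, we have $\mathcal{R}(z)-\mathcal{R}(x)\le \nu\ln\frac{1}{1-\pi_x(z)}$. The plan is to reduce the claim to a one-dimensional inequality along the segment from $x$ to $z$.

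\textbf{Setup.} Put $h=z-x$ and $\phi(s)=\mathcal{R}(x+sh)$ for $s\in[0,1]$. Write $\pi=\pi_x(z)$. Since $z$ is interior, $\pi<1$. If $\pi=0$, the whole ray $\{x+th: t\ge 0\}$ lies in $\mathcal K$. I expect this to give $\phi'(s)\le 0$, by the standard fact that a barrier is nonincreasing along recession directions. Then $\phi(1)\le\phi(0)$ and the bound holds trivially.

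\textbf{Main case $\pi>0$.} The point $x+\pi^{-1}h$ lies on the boundary, so the line $s\mapsto x+sh$ leaves $\mathcal K$ at $s^*=1/\pi>1$. The key one-dimensional inequality I would prove is
\[
\phi'(s)\le \frac{\nu}{s^*-s},\qquad s\in[0,s^*).
\]
It follows from the standard barrier fact $\nabla\mathcal R(u)^\top(v-u)\le\nu$ for all $u\in\operatorname{int}\mathcal K$ and $v\in\mathcal K$. I would apply this with $u=x+sh$ and $v=x+s^*h$, so that $v-u=(s^*-s)h$.

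\textbf{Proof of the barrier fact.} This is the main obstacle. Let $\psi(t)=\mathcal R(u+t(v-u))$, and suppose $\psi'(0)>0$. Condition 2 of Definition~\ref{def2} gives $\psi'(t)^2\le\nu\,\psi''(t)$. Hence $(1/\psi')'\le -1/\nu$ wherever $\psi'>0$, which gives $\psi'(t)\ge \frac{\nu\,\psi'(0)}{\nu-t\,\psi'(0)}$. The right-hand side blows up at $t=\nu/\psi'(0)$. But $\psi$ is finite on $[0,1)$, so this blow-up time must be at least $1$, i.e.\ $\psi'(0)\le\nu$.

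\textbf{Integration.} Integrating the one-dimensional inequality over $[0,1]$ gives
\[
\phi(1)-\phi(0)\le \nu\ln\frac{s^*}{s^*-1}=\nu\ln\frac{1}{1-\pi},
\]
which is exactly the claimed bound.
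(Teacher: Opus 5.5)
Your proof is correct. It is the standard Nesterov--Nemirovski argument: the barrier-parameter condition gives $(\psi')^2\le\nu\,\psi''$, hence $\nabla\mathcal{R}(u)^\top(v-u)\le\nu$ for $u\in\operatorname{int}(\mathcal{K})$ and $v\in\mathcal{K}$, and integrating $\phi'(s)\le\nu/(s^*-s)$ over $[0,1]$ yields $\nu\ln\frac{1}{1-\pi_x(z)}$ (the case $\pi_x(z)=0$ follows from the same inequality with $v=x+S(z-x)$, $S\to\infty$). The paper gives no proof of its own and simply cites the lemma from Hazan's book; your argument is the standard interior-point proof behind that citation.
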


\begin{lemma}
\label{lemm:control-h}
    Assume $\delta\leq \frac{2}{3}$. For SS-SCRiBLe and for any $x,y \in \mathcal{K_{\delta}}$, it holds that
    \begin{equation}
        \| y-x \|_{x}
        \leq 2(\frac{1}{\delta}-1)
        \left(\nu+2\sqrt{\nu}\right).
    \end{equation}
\end{lemma}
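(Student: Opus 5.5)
The plan is to combine a standard inequality for $\nu$-self-concordant barriers with a reflection argument that uses the shrinkage $\mathcal{K}_\delta$ and the central symmetry of $\mathcal{K}$ (the origin-centered ball of the problem setting). The barrier $\mathcal{R}$ is an arbitrary $\nu$-self-concordant barrier on $\mathcal{K}$.

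The tool I would invoke is the following property from \cite{nemirovski2004interior} (Nesterov's theory of self-concordant barriers). For $x\in\operatorname{int}(\mathcal{K})$ and $z\in\mathcal{K}$ with $\nabla\mathcal{R}(x)^\top(z-x)\ge 0$, one has
\[
\|z-x\|_x\le \nu+2\sqrt{\nu}.
\]
I would quote this directly; it is where the barrier parameter $\nu$ enters. Fix $x,y\in\mathcal{K}_\delta\subseteq\mathcal{K}$ and split on the sign of $r:=\nabla\mathcal{R}(x)^\top(y-x)$.

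\emph{Case $r\ge 0$.} Applying the property with $z=y$ gives $\|y-x\|_x\le\nu+2\sqrt\nu$. Since $\delta\le 2/3$, we have $2(\frac1\delta-1)\ge 1$, so the claimed bound holds in this case.

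\emph{Case $r<0$.} I would reflect $y$ through $x$. Set $z=x+s(x-y)=(1+s)x-sy$ with $s>0$. Then $\nabla\mathcal{R}(x)^\top(z-x)=-sr>0$, so the property applies to $z$ provided $z\in\mathcal{K}$. Membership is the key step. Because $x,y\in\mathcal{K}_\delta$, both satisfy $\|x\|,\|y\|\le(1-\delta)D$. By the triangle inequality,
\[
\|z\|\le(1+s)(1-\delta)D+s(1-\delta)D=(1+2s)(1-\delta)D .
\]
This is at most $D$ exactly when $s\le \frac{\delta}{2(1-\delta)}$, so I take $s=\frac{\delta}{2(1-\delta)}$. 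Then $z\in\mathcal{K}$ and
\[
s\,\|y-x\|_x=\|z-x\|_x\le\nu+2\sqrt\nu .
\]
Hence
\[
\|y-x\|_x\le\frac{1}{s}(\nu+2\sqrt\nu)=2\Big(\frac1\delta-1\Big)(\nu+2\sqrt\nu).
\]
Combining the two cases yields the statement.

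The main obstacle is making sure the reflected point $z$ stays in $\mathcal{K}$. This is the only place the shrinkage is used: without it $x$ could lie on the boundary and no positive $s$ would work. It is also the only place the geometry of $\mathcal{K}$ enters, through $-y$ having the same norm as $y$. A secondary check is that the quoted barrier property requires only $z\in\mathcal{K}$, not $z\in\operatorname{int}(\mathcal{K})$. If it does require the interior, I would take $s$ slightly smaller and pass to the limit, which does not change the constant.
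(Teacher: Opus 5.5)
Your proof is correct and is essentially the paper's argument. The paper quotes the two-sided Dikin bound $\|h\|_x\le(\nu+2\sqrt{\nu})\,p_x(h)$ and shows $x\pm r^{-1}(y-x)\in\mathcal{K}$ for $r=2(1-\delta)/\delta$; its ``$-$'' direction is exactly your reflected point $z$ with $s=1/r$. The difference is that you inline the proof of that bound via the one-sided centering property and a sign split on $\nabla\mathcal{R}(x)^\top(y-x)$, so $\delta\le 2/3$ is used only to absorb the sharper bound $\nu+2\sqrt{\nu}$ in the nonnegative case, rather than to place $x+r^{-1}(y-x)$ in $\mathcal{K}$.
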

\noindent
The above two lemmas will be used in the proof of
Theorem~\ref{Theo:expected-regret}; see
inequalities~\ref{eq:useLemm2} and~\ref{eq:bound-x-distance}.

\begin{lemma}
\label{lemm:close-property}
Let $\mathcal R$ be a $\nu$-self-concordant barrier on $\mathcal K_\delta$.
Let $x_t$ and $x_{t+1}$ be the points generated in SS-SCRiBLe. If
$
    \eta \|g_t\|_{x_t}^{*}\le \frac12,
$
then
\[
    \|x_{t+1}-x_t\|_{x_t}
    \le
    2\eta\|g_t\|_{x_t}^{*}.
\]
\end{lemma}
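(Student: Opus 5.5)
This is the standard Newton-decrement stability argument for FTRL with a self-concordant regularizer (as in Abernethy--Hazan--Rakhlin and Hazan's book), applied on $\mathcal{K}_\delta$. Define the regularized objectives
\[
\Phi_t(x)=\eta\sum_{\tau=1}^{t-1}g_\tau^\top x+\mathcal{R}(x),
\qquad
\Phi_{t+1}(x)=\Phi_t(x)+\eta g_t^\top x,
\]
so that $x_t=\arg\min_{\mathcal{K}_\delta}\Phi_t$ and $x_{t+1}=\arg\min_{\mathcal{K}_\delta}\Phi_{t+1}$. Since $\mathcal R$ is a barrier on $\mathcal{K}_\delta$, both minimizers are interior. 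Hence $\nabla\Phi_t(x_t)=0$ and $\nabla\Phi_{t+1}(x_{t+1})=0$.

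\textbf{Step 1: optimality at $x_t$.} We have $\nabla\Phi_{t+1}(x_t)=\eta g_t$. The Newton decrement of $\Phi_{t+1}$ at $x_t$ is therefore
\[
\lambda(x_t,\Phi_{t+1})=\|\nabla\Phi_{t+1}(x_t)\|^*_{x_t}=\eta\|g_t\|_{x_t}^*\le \tfrac12.
\]
Here we use that $\Phi_{t+1}$ differs from $\mathcal R$ by a linear term. It therefore has the same Hessian and is itself self-concordant, with local norm $\|\cdot\|_{x_t}$.

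\textbf{Step 2: invoke the standard interior-point fact.} For a self-concordant function $\Phi$ with minimizer $x^*$ and $\lambda(x,\Phi)\le\frac12$, one has $\|x-x^*\|_x\le 2\lambda(x,\Phi)$ (Nemirovski's lecture notes; see also Hazan's book). Applying this with $\Phi=\Phi_{t+1}$, $x=x_t$, and $x^*=x_{t+1}$ gives the claim directly.

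\textbf{Step 3: self-contained proof of the fact.} If we want to avoid citing it, argue as follows. Let $h=x_{t+1}-x_t$ and $r=\|h\|_{x_t}$. Self-concordance gives the lower bound
\[
\langle\nabla\Phi(x_t+h)-\nabla\Phi(x_t),h\rangle\ \ge\ \frac{r^2}{1+r}.
\]
Since $\nabla\Phi_{t+1}(x_{t+1})=0$, the left side equals $-\eta g_t^\top h\le \eta\|g_t\|^*_{x_t}\,r$. This yields
\[
\frac{r}{1+r}\le \eta\|g_t\|^*_{x_t}=:\lambda\le\tfrac12 .
\]
Hence $r\le \lambda/(1-\lambda)\le 2\lambda$.

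The main obstacle is the lower bound $r^2/(1+r)$. It comes from integrating the Hessian comparison $\nabla^2\Phi(x_t+sh)\succeq(1+sr)^{-2}\nabla^2\Phi(x_t)$, which is a consequence of the third-derivative condition in Definition~\ref{def2}. That comparison is a standard self-concordance property, and I would cite it rather than reprove it.
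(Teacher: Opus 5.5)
Your Step 3 is the paper's argument. It uses the same gradient-displacement bound $\langle\nabla\mathcal R(x_{t+1})-\nabla\mathcal R(x_t),h\rangle\ge r^2/(1+r)$ (Nesterov, Thm.~5.1.8), the same local-norm Cauchy--Schwarz, and the same finish $r\le s/(1-s)\le 2s$. The one real difference is how you get the matching upper bound $\langle\nabla\mathcal R(x_{t+1})-\nabla\mathcal R(x_t),h\rangle\le-\eta g_t^\top h$. You obtain it as an identity from $\nabla\Phi_t(x_t)=\nabla\Phi_{t+1}(x_{t+1})=0$, which requires both minimizers to be interior. The paper instead adds the two constrained first-order conditions $\langle\nabla\Phi_t(x_t),x_{t+1}-x_t\rangle\ge 0$ and $\langle\nabla\Phi_{t+1}(x_{t+1}),x_t-x_{t+1}\rangle\ge 0$. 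That gives the same inequality with no interiority assumption.

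This difference matters. Your interiority claim is valid under the lemma's literal hypothesis that $\mathcal R$ is a barrier on $\mathcal K_\delta$. It does not hold for SS-SCRiBLe as actually run, where $\mathcal R=-\log(D^2-\|x\|^2)$ is a barrier for $\mathcal K$ and is finite and smooth on $\partial\mathcal K_\delta$. Once $\eta\bigl\|\sum_{\tau\le t}g_\tau\bigr\|$ is large, the unconstrained minimizer of $\Phi_{t+1}$ lies outside $\mathcal K_\delta$. Then $x_{t+1}$ sits on $\partial\mathcal K_\delta$ with $\nabla\Phi_{t+1}(x_{t+1})\neq 0$.

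In that regime Steps 1--2 fail. In general $\nabla\Phi_{t+1}(x_t)\neq\eta g_t$, since $\nabla\Phi_t(x_t)$ need not vanish. Also, $x_{t+1}$ is not the minimizer of the self-concordant function $\Phi_{t+1}$ on its domain, so the Newton-decrement fact does not apply. Step 3 still goes through verbatim once you replace the two equalities by the variational inequalities above.

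So drop Steps 1--2 and phrase Step 3 with the constrained optimality conditions. That is exactly the paper's proof, and it is the version needed when the lemma is invoked in the proof of Theorem~\ref{Theo:expected-regret}.

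A minor point: for the $r^2/(1+r)$ bound you only need the directional estimate $h^\top\nabla^2\mathcal R(x_t+sh)h\ge r^2/(1+sr)^2$. This follows directly from the third-derivative condition applied along the segment, so the full matrix comparison you state is more than is needed.
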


\noindent
This result will help us bound $g_{t}^{\top} (x_{t}-h)$, where $h\in\mathcal{K}$ (see inequality~\ref{eq:uselemmboundxt}).

\begin{lemma}
\label{lemm:FTRL}
        For SS-SCRiBLe and for every $h\in \mathcal{K}$,
        $\sum_{t=1}^{T}g_{t}^{\top}x_{t}-\sum_{t=1}^{T}g_{t}^{\top}h \leq \sum_{t=1}^{T}[g_{t}^{\top}x_{t}-g_{t}^{\top}x_{t+1}]+\frac{1}{\eta}[\mathcal{R}(h)-\mathcal{R}(x_{1})]$.
\end{lemma}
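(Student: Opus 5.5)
The statement to prove is Lemma~\ref{lemm:FTRL}, the standard "follow-the-leader/be-the-leader" inequality for regularized leaders. I will prove it by induction on $T$, applied to the regularized cumulative losses $\Phi_t(x)=\eta\sum_{\tau=1}^{t}g_\tau^\top x+\mathcal{R}(x)$ over $\mathcal{K}_\delta$.

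The natural approach is to treat $\mathcal{R}/\eta$ as a fictitious round-$0$ loss, so that $x_1=\arg\min_{\mathcal{K}_\delta}\mathcal{R}$ and $x_{t+1}=\arg\min_{\mathcal{K}_\delta}\Phi_t$. The first step is to prove the be-the-leader inequality: for every $u\in\mathcal{K}_\delta$,
\[
\frac{1}{\eta}\mathcal{R}(x_1)+\sum_{t=1}^{T}g_t^\top x_{t+1}\le \frac{1}{\eta}\mathcal{R}(u)+\sum_{t=1}^{T}g_t^\top u .
\]
The base case $T=0$ is just the optimality of $x_1$ for $\mathcal{R}$.

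For the inductive step, apply the hypothesis with $u=x_{T+1}$ and add $g_{T+1}^\top x_{T+2}$ to both sides. The right-hand side becomes $\Phi_{T+1}(x_{T+2})/\eta$. Since $x_{T+2}$ minimizes $\Phi_{T+1}$, this is at most $\Phi_{T+1}(u)/\eta$ for any $u\in\mathcal{K}_\delta$, which closes the induction.

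Rearranging the inequality gives
\[
\sum_t g_t^\top x_{t+1}-\sum_t g_t^\top u\le\frac{1}{\eta}\bigl[\mathcal{R}(u)-\mathcal{R}(x_1)\bigr].
\]
Adding $\sum_t[g_t^\top x_t-g_t^\top x_{t+1}]$ to both sides yields the claimed bound for every comparator $u\in\mathcal{K}_\delta$.

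The one subtle point is the comparator $h$. The lemma says $h\in\mathcal{K}$, but the minimization is over $\mathcal{K}_\delta$, and $\mathcal{R}(h)$ may be $+\infty$ on $\partial\mathcal{K}$. I plan to handle this as follows.

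\begin{itemize}
\item If $h\in\mathcal{K}_\delta$, the bound above applies directly.
\item If $h\in\mathcal{K}\setminus\mathcal{K}_\delta$ with $\|h\|=D$, then $\mathcal{R}(h)=+\infty$ and the statement holds trivially.
\item If $h\in\operatorname{int}(\mathcal{K})\setminus\mathcal{K}_\delta$, the optimality step $\Phi_{T+1}(x_{T+2})\le\Phi_{T+1}(h)$ must hold over the domain where $\mathcal{R}$ is finite. This step is the main obstacle.
\end{itemize}

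For the third case, I would first check whether the intended use (in the proof of Theorem~\ref{Theo:expected-regret}) takes $h=(1-\gamma)w\in\mathcal{K}_\delta$. I expect it does, since Lemma~\ref{lemm:normal-log-property} with $\pi_{x_1}(h)\le 1-\gamma$ yields the $\log\frac{1}{\gamma}$ term. In that case the lemma is applied only to $h\in\mathcal{K}_\delta$, and the argument above suffices; I would state the proof for $h\in\mathcal{K}_\delta$, which is the case needed. Otherwise I would interpret $\mathcal{R}$ as restricted to $\mathcal{K}_\delta$, taking value $+\infty$ outside it, which makes the inequality trivial for $h\notin\mathcal{K}_\delta$.
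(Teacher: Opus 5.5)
Your proof is correct and follows the paper's route. The paper simply cites Lemma 5.3 of Hazan, whose proof is this same follow-the-leader/be-the-leader induction with $\mathcal{R}/\eta$ as a round-zero loss, and that lemma, like your version, covers only comparators in the FTRL domain $\mathcal{K}_\delta$. This is all the theorem needs, since it uses $h=w_\gamma\in\mathcal{K}_\gamma\subseteq\mathcal{K}_\delta$.

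Your restriction to $\mathcal{K}_\delta$ is necessary, not just convenient: for $h\in\operatorname{int}(\mathcal{K})\setminus\mathcal{K}_\delta$ the inequality can fail. For instance, with $d=D=\eta=T=1$, $\delta=\tfrac12$, $g_1=10$, $h=-0.9$, one gets $x_2=-\tfrac12$, a left side of $9$, and a right side of $5+\log\tfrac{1}{0.19}\approx 6.66$.

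There is one index slip. In the inductive step you must apply the hypothesis with $u=x_{T+2}$, not $u=x_{T+1}$. Only then does adding $g_{T+1}^\top x_{T+2}$ turn the right-hand side into $\Phi_{T+1}(x_{T+2})/\eta$.
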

\noindent
Since the update
$x_{t+1}=\arg\min\limits_{x\in\mathcal{K}_{\delta}} {\eta\sum_{\tau=1}^{t}g_{\tau}^{\top}x+\mathcal{R}(x)}$
is exactly the FTRL update over $\mathcal{K}_{\delta}$ with linear losses $g_t^\top x$, Lemma 5.3 of Hazan~\cite{hazan2016introduction} can be applied directly to SS-SCRiBLe.

\begin{lemma}
[\cite{hazan2014bandit}]
\label{lemm:ellipsoidal-smoothing}
Let $f:\mathbb{R}^d\to\mathbb{R}$ be a continuous function, let
$A\in\mathbb{R}^{d\times d}$ be an invertible matrix, and let
$v\sim \mathbb{B}^d$ and $u\sim \mathbb{S}^d$ be uniformly distributed over
the unit ball and the unit sphere, respectively. Define the smoothed version
of $f$ with respect to $A$ by
\[
    \widehat f(x)
    =
    \mathbb{E}_{v}\bigl[f(x+Av)\bigr].
\]
Then $\widehat f$ is differentiable and satisfies
\[
    \nabla \widehat f(x)
    =
    \mathbb{E}_{u}\bigl[
        d f(x+Au) A^{-1}u
    \bigr].
\]
Moreover, if $A\succ 0$, the following properties hold:
\begin{enumerate}
    \item If $f$ is convex, then $\widehat f$ is also convex.

    \item If $f$ is convex and $\beta$-smooth, and $\lambda_{\max}$ denotes
    the largest eigenvalue of $A$, then for every $x\in\mathbb{R}^n$,
    \[
        0
        \le
        \widehat f(x)-f(x)
        \le
        \frac{\beta}{2}\|A^2\|_2
        =
        \frac{\beta}{2}\lambda_{\max}^2 .
    \]
\end{enumerate}
The second property also holds if the smoothed version of $f$ is defined by
\[
    \widehat f(x)
    =
    \mathbb{E}_{u\sim\mathbb{S}^d}\bigl[f(x+Au)\bigr],
\]
that is, by averaging the original function values over the unit sphere
rather than the unit ball. The proof is similar.
\end{lemma}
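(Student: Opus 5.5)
The statement just above is the ellipsoidal-smoothing lemma. The plan is to reduce the ellipsoidal case to the classical spherical (Flaxman-type) smoothing by a linear change of variables. The convexity and bias statements then follow from pointwise inequalities averaged over a symmetric distribution.

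\emph{Gradient identity.} Define $\phi(z)=f(Az)$, which is continuous. Then
\[
\widehat f(x)=\mathbb{E}_v[\phi(A^{-1}x+v)]=\widehat\phi(A^{-1}x),
\]
where $\widehat\phi$ is the unit-ball smoothing of $\phi$.

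For $\widehat\phi$ I would write
\[
\widehat\phi(z)=\frac{1}{\mathrm{vol}(\mathbb{B}^d_1)}\int_{z+\mathbb{B}^d_1}\phi(w)\,dw .
\]
Differentiating the integral over a translated region (a Reynolds transport / divergence theorem argument) gives
\[
\nabla\widehat\phi(z)=\frac{1}{\mathrm{vol}(\mathbb{B}^d_1)}\int_{\mathbb{S}^d_1}\phi(z+u)\,u\,dS(u).
\]
Since $\mathrm{area}(\mathbb{S}^d_1)/\mathrm{vol}(\mathbb{B}^d_1)=d$, this equals $d\,\mathbb{E}_u[\phi(z+u)u]$. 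Only continuity of $\phi$ is needed, because the derivative of a translated-domain integral involves only boundary values.

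The chain rule then yields
\[
\nabla\widehat f(x)=A^{-\top}\nabla\widehat\phi(A^{-1}x)=d\,\mathbb{E}_u[f(x+Au)A^{-\top}u].
\]
This coincides with the claimed $A^{-1}u$ form when $A$ is symmetric. That is the case in SS-SCRiBLe, where $\mathbf{A}_t=b[\nabla^2\mathcal R(x_t)]^{-1/2}\succ0$, and I would note this symmetry assumption explicitly.

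\emph{Convexity.} For each fixed $v$, the map $x\mapsto f(x+Av)$ is convex as a composition of a convex function with an affine map. An expectation of convex functions is convex, so $\widehat f$ is convex.

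\emph{Bias bound.} Write $\widehat f(x)-f(x)=\mathbb{E}_v[f(x+Av)-f(x)]$ and bound it from both sides.
\begin{itemize}
\item Lower bound: by convexity, $f(x+Av)-f(x)\ge\nabla f(x)^\top Av$. Since $\mathbb{E}[v]=0$ by symmetry of the ball, $\widehat f(x)-f(x)\ge 0$.
\item Upper bound: $\beta$-smoothness gives $f(x+Av)-f(x)\le \nabla f(x)^\top Av+\frac{\beta}{2}\|Av\|^2$. Taking expectations kills the linear term, and $\|Av\|\le\|A\|_2\|v\|\le\lambda_{\max}$. Hence $\widehat f(x)-f(x)\le\frac{\beta}{2}\lambda_{\max}^2=\frac{\beta}{2}\|A^2\|_2$.
\end{itemize}
The sphere-averaged version is handled identically, since $\mathbb{E}[u]=0$ and $\|u\|=1$.

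\emph{Main obstacle.} The only delicate step is justifying the surface-integral formula for $\nabla\widehat\phi$ under mere continuity of $f$. I would first prove it for smooth $\phi$ via the divergence theorem, applied componentwise to $\int_{\mathbb{B}}\partial_i\phi(z+v)\,dv$. I would then pass to continuous $\phi$ by uniform approximation on compacts with mollifiers. Both sides of the identity are continuous in $\phi$ under uniform convergence, and the derivatives converge uniformly, which gives differentiability of the limit.
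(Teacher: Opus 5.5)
Your proof is correct. The paper gives no proof of its own: it only cites Hazan and Levy and remarks that the sphere version is "similar". Your route is the standard argument behind that citation: a change of variables to the Flaxman-type sphere identity, justified by the divergence theorem plus mollification, then averaging the convexity and $\beta$-smoothness inequalities against a zero-mean symmetric distribution.

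Keep your symmetry caveat, because it corrects the statement as written. For a general invertible $A$ the identity is $\nabla\widehat f(x)=d\,\mathbb{E}_u[f(x+Au)A^{-\top}u]$. For linear $f(x)=c^\top x$, the stated right-hand side evaluates to $A^{-1}A^\top c\neq c$ unless $A$ is symmetric. The paper only applies the lemma to the symmetric $\mathbf{A}_t=b[\nabla^2\mathcal{R}(x_t)]^{-1/2}$, so nothing downstream is affected.
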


\begin{lemma}
\label{lemm:barrier-hessian-inverse-bound}
Consider the 2-self-concordant barrier
$
    \mathcal{R}(x)
    =
    -\log\left(D^2-\|x\|^2\right)
$
for the Euclidean ball
$
    \mathcal{K}
    =
    \{x\in\mathbb{R}^d:\|x\|<D\}.
$
For any $x\in\mathcal{K}$ and $b>0$, define
$
    A
    =
    b\left(\nabla^2\mathcal{R}(x)\right)^{-1/2}.
$
Then
\[
    \left\|A^2\right\|_2
    \le
    \frac{b^2D^2}{2}.
\]
\end{lemma}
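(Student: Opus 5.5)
We compute the Hessian explicitly and read off its smallest eigenvalue. Write $s=D^2-\|x\|^2>0$. Then
\[
\nabla\mathcal{R}(x)=\frac{2x}{s},\qquad
\nabla^2\mathcal{R}(x)=\frac{2}{s}I+\frac{4}{s^2}xx^\top .
\]
This matrix is positive definite and has an explicit spectral decomposition. On the orthogonal complement of $x$ its eigenvalue is $2/s$. In the direction $x/\|x\|$ (when $x\neq 0$) its eigenvalue is $2/s+4\|x\|^2/s^2\ge 2/s$. Hence $\lambda_{\min}(\nabla^2\mathcal{R}(x))=2/s$.

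Next, since $A=b(\nabla^2\mathcal{R}(x))^{-1/2}$ is symmetric positive definite, we have $A^2=b^2(\nabla^2\mathcal{R}(x))^{-1}$. Its spectral norm is therefore
\[
\|A^2\|_2=\frac{b^2}{\lambda_{\min}(\nabla^2\mathcal{R}(x))}=\frac{b^2 s}{2}=\frac{b^2(D^2-\|x\|^2)}{2}.
\]

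Finally, $0<D^2-\|x\|^2\le D^2$ for every $x\in\mathcal{K}$, which gives $\|A^2\|_2\le b^2D^2/2$. The bound is attained at $x=0$.

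The only point needing care is identifying the minimal eigenvalue of the rank-one update $\frac{2}{s}I+\frac{4}{s^2}xx^\top$. This is immediate because $xx^\top\succeq 0$, so every eigenvalue is at least $2/s$, and equality holds on $x^\perp$ when $d\ge 2$. For $d=1$ the single eigenvalue exceeds $2/s$, so the inequality $\lambda_{\min}\ge 2/s$ still yields the bound. There is no real obstacle; the argument is a direct computation.
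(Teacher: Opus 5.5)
Your proof is correct and follows the paper's argument: compute $\nabla^2\mathcal{R}(x)=\frac{2}{s}I+\frac{4}{s^2}xx^\top$, use $xx^\top\succeq 0$ to get $\lambda_{\min}\ge 2/s$, invert to obtain $\|A^2\|_2\le b^2 s/2$, and finish with $s\le D^2$. Your claim of exact equality $\lambda_{\min}=2/s$ fails for $d=1$, $x\neq 0$, as you yourself note. Only the inequality is needed, and that is all the paper uses.
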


\noindent
With the help of Lemmas above, we are ready to present the proof of Theorem~\ref{Theo:expected-regret}.

\subsection{Proof}

\begin{proof} 
Let $\gamma \in [\delta,1)$, and define the shrunk set
$
\mathcal{K}_{\gamma} = \left\{ x \in \mathbb{R}^d \,\middle|\, \frac{1}{1-\gamma} x \in \mathcal{K} \right\}.
$
It holds that
$
\mathcal{K}_{\gamma} \subseteq \mathcal{K}_{\delta} \subset \mathcal{K}.
$
Let
\(
w \in \arg\min\limits_{x \in \mathcal{K}} \sum_{t=1}^{T} f_t(x),
\)
and define
\(
w_\gamma=(1-\gamma)w.
\)
Then,
    \begin{equation}
       \| w-w_{\gamma}\|
        = \| w-(1-\gamma)w\|
        \leq \gamma D.
        \label{eq:boundwgamma}
    \end{equation}

    Recall that each loss function can be written as $f_t(x)=h_t(x)+\sigma_t(x)$.
    We decompose the expected regret as
    \begin{align}
    \mathbb{E}\!\left[
    \sum_{t=1}^{T} f_t(y_t) - \sum_{t=1}^{T} f_t(w)
    \right]
    &=
    \mathbb{E}\!\left[
    \sum_{t=1}^{T} \bigl( h_t(y_t) + \sigma_t(y_t) \bigr)
    -
    \sum_{t=1}^{T} \bigl( h_t(w)  + \sigma_t(w) \bigr)
    \right]
    \nonumber\\
    &=
    \mathbb{E}\!\left[
    \sum_{t=1}^{T} h_t(y_t)
    -
    \sum_{t=1}^{T} h_t(w)
    \right]
    +
    \mathbb{E}\!\left[
    \sum_{t=1}^{T} \sigma_t(y_t)
    -
    \sum_{t=1}^{T} \sigma_t(w)
    \right].
    \end{align}

    We first bound the convex and smooth component. Observe that
    \begin{align}
    \mathbb{E}\!\left[
    \sum_{t=1}^{T} h_t(y_t)
    -
    \sum_{t=1}^{T} h_t(w)
    \right]
    &=
    \mathbb{E}[\sum_{t=1}^{T} h_t(y_t)
    -
    \sum_{t=1}^{T} h_t(w_{\gamma})]
    +
    \mathbb{E}[\sum_{t=1}^{T} h_t(w_{\gamma})
    -
    \sum_{t=1}^{T} h_t(w)],
    \end{align}
    where $w_{\gamma} = (1-\gamma)w$.

\noindent
    By the $G$-Lipschitz continuity of $h_t$ and
    \eqref{eq:boundwgamma}, we have
    \begin{align}
    \mathbb{E}[
   \sum_{t=1}^{T} h_t(w_{\gamma})
    -
    \sum_{t=1}^{T} h_t(w)
    ]
    \le
    \mathbb{E}[\sum_{t=1}^{T}G\|w-w_{\gamma}\| ]
    \le \gamma DGT.
    \end{align} 

\noindent
   Then,
    \begin{align}
    \mathbb{E}\left[\sum_{t=1}^{T} 
    h_t(y_t)- \sum_{t=1}^{T}h_t(w_{\gamma})
    \right]
    &=
    \mathbb{E}\sum_{t=1}^{T} 
    [
    h_t(y_t)-h_t(x_t)
    ]
    \\
    &+
    \mathbb{E}\sum_{t=1}^{T} 
    [
    h_t(x_t)-\hat{h}_t(x_t)
    ]
    \\
    &+
    \mathbb{E}\sum_{t=1}^{T} 
    [
    \hat{h}_t(w_{\gamma})-h_t(w_{\gamma})
    ]
    \\
    &+
    \mathbb{E}\sum_{t=1}^{T} 
    [
    \hat{h}_t(x_t)-\hat{h}_t(w_{\gamma})
    ]
    \label{eq:d_of_hath}
    \end{align} 
    where $\hat{h}_t$ is defined as
    $
    \hat{h}_t(x)
    =
    E\!\left[h_t(x+A_t v)\right],
    \forall x \in \mathcal{K}.
    $

    Here $v$ is uniformly distributed over the unit ball $B^d$, and $A_t$ is defined as in Algorithm~\ref{alg1}.

\noindent
   From Lemma~\ref{lemm:ellipsoidal-smoothing}, we obtain
    \begin{flalign}
    \hspace{0.5em}
    \mathbb{E}
    \left[
    h_t(y_t)-h_t(x_t)
    \right] 
    =
    \mathbb{E}
    \left[
    \mathbb{E}_{u_t}
    \left[
    h_t(x_t + A_t u_t)
    \right]
    -
    h_t(x_t)
    \mid x_t
    \right]
    \leq
    \frac{\beta}{2}
    \mathbb{E}
    \left\|A_t^2\right\|_2. &&
    \end{flalign}
    
    \begin{flalign}
    \hspace{0.5em}
    \mathbb{E}
    \left[
    \hat{h}_t(w_{\gamma})-h_t(w_{\gamma})
    \right] 
    =
    \mathbb{E}
    \left[
    \mathbb{E}
    \left[
    \hat{h}_t(w_{\gamma})-h_t(w_{\gamma})
    \mid x_t
    \right]
    \right]
    \leq
    \frac{\beta}{2}
    \mathbb{E}
    \left\|A_t^2\right\|_2. &&
    \end{flalign}
    
    \begin{flalign}
    \hspace{0.5em}
    \mathbb{E}
    \left[
    h_t(x_t)-\hat{h}_t(x_t)
    \right]
    \leq 0. &&
    \end{flalign}

    \noindent
    where
    $
    A_t = b\left[\nabla^2 \mathcal{R}(x_t)\right]^{-\frac{1}{2}} .
    $
    
\noindent
By Lemma~\ref{lemm:barrier-hessian-inverse-bound}, we know 
\begin{equation}
    \sum_{t=1}^T \frac{\beta}{2}
    \mathbb{E}
    \left\|A_t^2\right\|_2\le  \frac{\beta b^2D^2T}{4}
\end{equation}
Therefore, 
\begin{align}
    \mathbb{E}\sum_{t=1}^{T} 
    [
    h_t(y_t)-h_t(x_t)
    ]
    +
    \mathbb{E}\sum_{t=1}^{T} 
    [
    h_t(x_t)-\hat{h}_t(x_t)
    ]
    +
    \mathbb{E}\sum_{t=1}^{T} 
    [
    \hat{h}_t(w_{\gamma})-h_t(w_{\gamma})
    ]
    \le \frac{\beta b^2D^2T}{2}
\end{align}

\noindent  
We now bound the last term in equation~(\ref{eq:d_of_hath}).
 Using Lemma~\ref{lemm:ellipsoidal-smoothing}, we know $\mathbb{E}_{t}[g_{t}]
    =
    \mathbb{E}_{t}\!\left[
    d\,(h_t(y_t)+\sigma_{t}(y_{t}))\mathbf{A}_{t}^{-1}\mu_{t}
    \right]
    =
    \nabla \hat{h}_t(x_t)
    +
    \mathbb{E}_{t}\!\left[
    d\,\sigma_{t}(y_{t})\mathbf{A}_{t}^{-1}\mu_{t}
    \right]$.
    Let $M_{t}=\duoE_{t}[d\sigma_{t}(y_{t})\mathbf{A}_{t}^{-1}\mu_{t}]$. Then $\nabla \hat{h}_t(x_t)=\duoE_{t}[g_{t}]-M_{t}$. Consequently,
    \begin{align}
        \mathbb{E}\sum_{t=1}^{T} 
        [\hat{h}_t(x_t)-\hat{h}_t(w_{\gamma})]
        &\le \duoE\sum_{t=1}^{T}\nabla \hat{h}_t(x_t)^{\top}(x_{t}-w_{\gamma})
           \nonumber \\
        &= \mathbb{E}\sum_{t=1}^{T}[\duoE_{t}[g_{t}]-M_{t}]^{\top}(x_{t}-w_{\gamma})
           \nonumber \\
        &= \mathbb{E}\sum_{t=1}^{T}\duoE_{t}[g_{t}]^{\top}(x_{t}-w_{\gamma})
           +\mathbb{E}\sum_{t=1}^{T}M_{t}^{\top}(w_{\gamma}-x_{t}).
    \label{eq:divideHath}
    \end{align}
The first term is bounded by Lemma~\ref{lemm:FTRL} as

\begin{align}
\nonumber
\duoE\sum_{t=1}^{T}\duoE_{t}[g_{t}]^{\top}(x_{t}-w_{\gamma})
&= \duoE\!\left[\sum_{t=1}^{T} g_{t}^{\top}(x_{t}-w_{\gamma})\right] \\
\nonumber
&\le \duoE\!\left[\sum_{t=1}^{T}
      \bigl(g_{t}^{\top}x_{t}-g_{t}^{\top}x_{t+1}\bigr)
      +\frac{1}{\eta}\bigl(\mathcal{R}(w_{\gamma})-\mathcal{R}(x_{1})\bigr)
      \right] \\
&\le \duoE\!\left[\sum_{t=1}^{T}
      \| g_{t} \|^{*}_{x_{t}} \, \| x_{t}-x_{t+1} \|_{x_{t}}
      \right]
      +\frac{1}{\eta}\bigl(\mathcal{R}(w_{\gamma})-\mathcal{R}(x_{1})\bigr).
      \label{eq:usinglemmFTRL}
\end{align}
Lemma~\ref{lemm:close-property} implies that
$
\| x_{t+1} - x_t \|_{x_t} \le 2\eta \| g_t \|^{*}_{x_t}
$
by the choice of $\eta$.
Moreover,
\begin{align}
\| g_t \|^{*}_{x_t}
&=
\bigl\| d f_t(y_t)\mathbf{A}_t^{-1}\mu_t \bigr\|^{*}_{x_t}
\\
&=
\sqrt{
d^{2} f_t^{2}(y_t)b^{-2}\,
\mu_t^{\top}
\nabla^{2}\mathcal{R}(x_t)^{1/2}
\bigl(\nabla^{2}\mathcal{R}(x_t)\bigr)^{-1}
\nabla^{2}\mathcal{R}(x_t)^{1/2}
\mu_t
}
\\
&\le Ldb^{-1}.
\end{align}
where the last inequality follows from the assumption
$|f_t(y_t)| \le L$.

\noindent
Therefore,
    \begin{equation}
    \| g_{t} \|^{*}_{x_{t}} \, \| x_{t}-x_{t+1} \|_{x_{t}}
    \leq 2\eta L^{2}d^{2}b^{-2},
    \label{eq:uselemmboundxt}
    \end{equation}
    which implies
    \[
    \mathbb{E}\!\left[\sum_{t=1}^{T}
    \| g_{t} \|^{*}_{x_{t}} \, \| x_{t}-x_{t+1} \|_{x_{t}}
    \right]
    \leq 2\eta L^{2}d^{2}b^{-2}T.
    \]

\noindent
We bound the last term on the right-hand side of
inequality~\ref{eq:usinglemmFTRL} by using
Lemma~\ref{lemm:normal-log-property}:
    \begin{equation}
        \frac{1}{\eta}(\mathcal{R}(w_{\gamma})-\mathcal{R}(x_{1}))
                \leq \frac{\nu \log (\frac{1}{\gamma})}{\eta}.
    \label{eq:useLemm2}
    \end{equation}

\noindent
For the second term of~\eqref{eq:divideHath}, by Cauchy–Schwarz inequality,
    \begin{align}
    \sum_{t=1}^{T} M_t^{\top}(w_{\gamma} - x_t)
    &\le \sum_{t=1}^{T} \| M_t \|_{x_t}^{*} \, \| w_{\gamma} - x_t \|_{x_t}. 
    \label{eq:boundM0}
    \end{align}
    We next bound $\| M_t \|_{x_t}^{*}$. By the definition of the dual norm,
    \begin{align}
    \| M_t \|_{x_t}^{*}
    &= \sqrt{ M_t^{\top} \nabla^{2}\mathcal{R}(x_t)^{-1} M_t } .
    \label{eq:boundM1}
    \end{align}
    Recall that
    \(
    M_t = \mathbb{E}_t\!\left[d\,\sigma_t(y_t)\mathbf{A}_t^{-1}\mu_t\right].
    \)
    Substituting this expression yields
    \begin{align}
    \| M_t \|_{x_t}^{*}
    &= \sqrt{
    \mathbb{E}_t\!\left[d\,\sigma_t(y_t)\mathbf{A}_t^{-1}\mu_t\right]^{\top}
    \nabla^{2}\mathcal{R}(x_t)^{-1}
    \mathbb{E}_t\!\left[d\,\sigma_t(y_t)\mathbf{A}_t^{-1}\mu_t\right]
    } \nonumber\\
    &= \sqrt{
    d^{2}b^{-2}\,
    \mathbb{E}_t[\sigma_t(y_t)\mu_t]^{\top}
    \nabla^{2}\mathcal{R}(x_t)^{1/2}\nabla^{2}\mathcal{R}(x_t)^{-1}\nabla^{2}\mathcal{R}(x_t)^{1/2}
    \mathbb{E}_t[\sigma_t(y_t)\mu_t]
    } \nonumber\\
    &= db^{-1} \, \bigl\| \mathbb{E}_t[\sigma_t(y_t)\mu_t] \bigr\|.
    \end{align} 
    By Jensen's inequality, we further have
    \begin{align}
    \bigl\| \mathbb{E}_t[\sigma_t(y_t)\mu_t] \bigr\|
    &\le \mathbb{E}_t\|\sigma_t(y_t)\, \mu_t\| = \mathbb{E}_t [|\sigma_t(y_t)|\, \|\mu_t\|]\nonumber\\
    &= \mathbb{E}_t|\sigma_t(y_t)|,
    \end{align}
    where we use the fact that $\|\mu_t\|=1$. 
    
    Therefore,
    \begin{align}
    \| M_t \|_{x_t}^{*}
    \le db^{-1}\,\mathbb{E}_t|\sigma_t(y_t)|.
    \label{eq:boundM2}
    \end{align}
By Definition~\ref{def1}, the cumulative perturbation is bounded by $C$
uniformly over all sequences $(y_1,\ldots,y_T)$. Thus,
    \begin{align}
\sum_{t=1}^{T}\mathbb{E}_{t}|\sigma_t(y_t)|
    \leq
    \sum_{t=1}^{T}\max_{y_t}|\sigma_t(y_t)|
    =
    \max_{y_1,\ldots,y_T}
    \sum_{t=1}^{T}|\sigma_t(y_t)| 
    \leq C.
    \label{eq:boundM2partC}
    \end{align}

    We now bound $\| w_{\gamma} - x_t \|_{x_t}$. By Lemma~\ref{lemm:control-h} and the fact that $w_{\gamma}, x_t \in \mathcal{K}_{\delta}$, we have that
    \begin{align}
    \| w_{\gamma} - x_t \|_{x_t}
    \le 2(\nu + 2\sqrt{\nu}) \frac{1-\delta}{\delta}.
    \label{eq:bound-x-distance}
    \end{align}
    
\noindent
Combining the above bound with Eqs.~(\ref{eq:boundM0}, \ref{eq:boundM2}, \ref{eq:boundM2partC}, \ref{eq:bound-x-distance}), we obtain
\[
\mathbb{E}\left[
\sum_{t=1}^{T}M_{t}^{\top}(w_{\gamma}-x_{t})
\right]
\leq  2db^{-1}C(\nu+2\sqrt{\nu})\frac{(1-\delta)}{\delta}.
\]    

\noindent
Finally, since the perturbations are uniformly bounded in the worst case by
Definition~\ref{def1}, we have
\begin{equation}
\mathbb{E}\!\left[
\sum_{t=1}^{T} \sigma_t(y_t)
-
\sum_{t=1}^{T} \sigma_t(w)
\right]
\le 2C .
\label{eq:perturbation-bound}
\end{equation}

\noindent
Combining all the above bounds and $\nu=2$, we obtain
    \[
\begin{aligned}
\mathbb{E}\!\left[\sum_{t=1}^{T}f_t(y_t)
-
\sum_{t=1}^{T} f_t(w)\right]
&\leq
\frac{b^{2}\beta D^2T}{2}
+2\eta b^{-2} Td^2L^2
+\frac{2\log\frac{1}{\gamma}}{\eta}
\\
&\quad
+ db^{-1}C(4+4\sqrt{2})
\left(\frac{1}{\delta}-1\right)
+\gamma TGD
+2C
\\
&\leq
\frac{T^{\frac{2}{3}}\beta}{2}
+
4T^{\frac{2}{3}}dDL\sqrt{\log T}
+
(4+4\sqrt{2})T^{\frac{2}{3}}\sqrt{CdD}
\\
&\quad
+\max\{1,\sqrt{TCdD}\}GD
+
2C.
\end{aligned}
\]

\noindent
where
\[
\begin{aligned}
b &= T^{-\frac{1}{6}}D^{-1},
\qquad
\eta = T^{-\frac{2}{3}}d^{-1}D^{-1}L^{-1}
\sqrt{\log \frac{1}{\gamma}}, \\
\delta &= \frac{\sqrt{CdD}}{\sqrt{T}}
\leq \frac{2}{3},
\qquad
\gamma = \max\left\{\delta,\frac{1}{T}\right\}.
\end{aligned}
\]

\end{proof}
\section{Conclusion}

In this work, we extend adversarial bandit optimization with globally budgeted post-action perturbations from underlying linear losses to underlying convex and $\beta$-smooth losses. Although the resulting losses may be non-convex and non-smooth, the cumulative magnitude of the perturbations is controlled by a global budget. For this setting, we establish expected regret guarantees that explicitly characterize the effect of the perturbation budget. Our analysis modifies a standard bandit optimization algorithm and bounds the additional regret induced by the perturbations. Future work includes improving the parameter dependence of the regret bound and extending the approach to more general structured loss classes.

\begin{credits}
\subsubsection{\ackname}
This work was supported by WISE program (MEXT) at Kyushu
University,  JSPS KAKENHI Grant Numbers JP23K24905,
JP23K28038, and JP24H00685, respectively.

\subsubsection{\discintname}
The authors have no competing interests to declare that are
relevant to the content of this article.
\end{credits}

%
%
%

\bibliography{main, hatano}
\bibliographystyle{splncs04}
 
\appendix
\numberwithin{equation}{section}

\section{Proofs of Technical Lemmas}\label{sec:appendix}

Throughout this appendix, we assume that $\mathcal{K}\subseteq\Real^{d}$ is a
bounded, closed, convex, and centrally symmetric set, i.e.,
$x\in\mathcal{K}$ implies $-x\in\mathcal{K}$.
Moreover, $\mathcal{K}$ contains the unit ball.
For any $\delta\in(0,1)$, we define the shrunk set
\(
\mathcal{K}_{\delta} \subseteq (1-\delta)\mathcal{K}
= \Bigl\{ x \,\Big|\, \tfrac{1}{1-\delta}x\in\mathcal{K} \Bigr\}.
\)

\subsection{A Property of Dikin Ellipsoids}

Before proving Lemma~\ref{lemm:control-h}, we recall a standard result on Dikin
ellipsoids.

\begin{lemma}[\cite{nemirovski2004interior}]
\label{lemm:dikin}
For any $x\in\operatorname{int}(\mathcal{K})$ and $h\in\Real^{d}$, let
\(
p_x(h)
=
\inf\bigl\{ r\ge 0 \,\big|\, x\pm r^{-1}h \in \mathcal{K} \bigr\}.
\)
One has
\(
p_x(h)
\;\le\;
\|h\|_{x}
\;\le\;
(\nu + 2\sqrt{\nu})\, p_x(h).
\)
\end{lemma}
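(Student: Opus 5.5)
The plan is to prove the two inequalities separately. The lower bound comes directly from the Dikin ellipsoid property (Lemma~\ref{lemm:normal-property}). The upper bound comes from a standard ``half-space'' bound for self-concordant barriers, combined with the central symmetry of the condition $x\pm r^{-1}h\in\mathcal{K}$ in the definition of $p_x(h)$. Throughout, $h\neq 0$ (otherwise both sides vanish) and $\mathcal{K}$ is closed, so infima are attained or approached by limits.

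\emph{Lower bound.} Let $r=\|h\|_x>0$. The points $x\pm r^{-1}h$ satisfy $\|(x\pm r^{-1}h)-x\|_x=1$. By Lemma~\ref{lemm:normal-property} (applied with closedness of $\mathcal{K}$, since the closed unit Dikin ellipsoid lies in $\mathcal{K}$), both points lie in $\mathcal{K}$. Hence $r$ is admissible in the infimum defining $p_x(h)$, which gives $p_x(h)\le\|h\|_x$.

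\emph{Upper bound.} The key auxiliary claim is the following. If $y\in\operatorname{int}(\mathcal{K})$ and $\nabla\mathcal{R}(x)^\top(y-x)\ge 0$, then
\[
\|y-x\|_x\le \nu+2\sqrt{\nu},
\]
and the same holds for $y\in\mathcal{K}$ by continuity. To prove it, set $r=\|y-x\|_x$, $x_s=x+s(y-x)$, and $\phi(s)=\nabla\mathcal{R}(x_s)^\top(y-x)$ for $s\in[0,1)$.
\begin{itemize}
\item \emph{Lower bound on $\phi$.} We have $\phi'(s)=\|y-x\|_{x_s}^2$. The self-concordance inequality (item 2 of Definition~\ref{def2}, first condition) gives the standard estimate $\|y-x\|_{x_s}\ge r/(1+sr)$. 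Integrating and using $\phi(0)\ge 0$ yields $\phi(s)\ge sr^2/(1+sr)$.
\item \emph{Upper bound on $\phi$.} The $\nu$-barrier property implies $\nabla\mathcal{R}(z)^\top(u-z)\le\nu$ for all $z\in\operatorname{int}\mathcal{K}$ and $u\in\mathcal{K}$. This is the standard consequence of the second condition in Definition~\ref{def2}, obtained via the one-dimensional function $t\mapsto\nabla\mathcal{R}(z+t(u-z))^\top(u-z)$. Applying it with $z=x_s$ and $u=y$, and noting $y-x_s=(1-s)(y-x)$, gives $\phi(s)\le \nu/(1-s)$.
\item \emph{Combining.} Together these give $sr^2/(1+sr)\le\nu/(1-s)$ for every $s\in(0,1)$. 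Optimizing over $s$ (the choice $s$ with $sr\approx\sqrt{\nu}$-scale balancing, e.g.\ $s=\sqrt{\nu}/(r)$ type choices when $r>\nu+2\sqrt\nu$) yields the contradiction unless $r\le\nu+2\sqrt{\nu}$.
\end{itemize}

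With the claim in hand, let $p=p_x(h)$. By closedness, both $x+p^{-1}h$ and $x-p^{-1}h$ lie in $\mathcal{K}$. Since $\nabla\mathcal{R}(x)^\top(\pm p^{-1}h)$ are negatives of each other, one of the two signs makes it nonnegative. Applying the claim to that point gives $\|p^{-1}h\|_x\le\nu+2\sqrt{\nu}$, i.e.\ $\|h\|_x\le(\nu+2\sqrt{\nu})p_x(h)$.

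The main obstacle is the auxiliary half-space claim, specifically the final calculus step. One must pick $s$ carefully so that $sr^2(1-s)\le\nu(1+sr)$ forces exactly $r\le\nu+2\sqrt\nu$ rather than a weaker constant. I would first try setting $s=1-\sqrt{\nu}/r$-type choices and checking the resulting quadratic in $\sqrt r$. If this becomes messy, I would fall back on citing the corresponding statement in \cite{nemirovski2004interior}, which is exactly this bound.
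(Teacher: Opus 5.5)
Your proof is correct. The paper gives no proof of this lemma; it cites \cite{nemirovski2004interior}, and your argument is the standard one from that source: the Dikin ellipsoid gives the lower bound, and the upper bound comes from the estimate $\|y-x\|_x\le\nu+2\sqrt{\nu}$ whenever $\nabla\mathcal{R}(x)^\top(y-x)\ge 0$, applied to whichever of $x\pm p^{-1}h$ has nonnegative directional derivative. The calculus step you flagged closes with your first suggestion $s=\sqrt{\nu}/r$ (for $r>\sqrt{\nu}$; otherwise there is nothing to prove), since $s(1-s)r^2\le\nu(1+sr)$ then reads $\sqrt{\nu}\,r-\nu\le\nu+\nu^{3/2}$, i.e.\ $r\le\nu+2\sqrt{\nu}$. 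Drop the alternative $s=1-\sqrt{\nu}/r$: for $\nu\ge 1$ it yields no upper bound on $r$.
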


\subsection{Proof of Lemma~\ref{lemm:control-h}}

\begin{proof}
By Lemma~\ref{lemm:dikin}, for any $x,y\in\mathcal{K}_{\delta}$,
\begin{equation*}
\|y-x\|_{x}
\;\le\;
(\nu + 2\sqrt{\nu})\, p_x(y-x).
\end{equation*}
Hence, it suffices to show that
\[
p_x(y-x)
\le
2\frac{1-\delta}{\delta},
\qquad
\forall\, x,y\in\mathcal{K}_{\delta},
\]
whenever $\delta\le\tfrac{2}{3}$.

First, observe that
\[
x + r^{-1}(y-x) = (1-r^{-1})x + r^{-1}y.
\]
Since $\mathcal{K}$ is convex and $x,y\in\mathcal{K}_{\delta}\subseteq\mathcal{K}$,
we have $x+r^{-1}(y-x)\in\mathcal{K}$ for any $r\ge1$. 
In particular, choosing
    \[
    r = 2\frac{1 - \delta}{\delta} \ge 1,
    \]
    which is equivalent to $\delta \le \tfrac{2}{3}$, ensures above inclusion holds.

Next, consider
\[
x - r^{-1}(y-x) = (1+r^{-1})x - r^{-1}y.
\]
Since $\mathcal{K}_{\delta}=(1-\delta)\mathcal{K}$ and $\mathcal{K}$ is centrally symmetric,
there exist $x',y'\in\mathcal{K}$ such that
\[
x=(1-\delta)x', \qquad y=-(1-\delta)y'.
\]
Substituting yields
\[
(1+r^{-1})x - r^{-1}y
=
(1-\delta)\bigl[(1+r^{-1})x' + r^{-1}y'\bigr].
\]
Choosing $r = 2\frac{1-\delta}{\delta}$ gives
\[
(1-\delta)(1+2r^{-1}) = 1,
\]
and by convexity and symmetry of $\mathcal{K}$,
$(1-\delta)\bigl[(1+r^{-1})x' + r^{-1}y'\bigr]\in\mathcal{K}$.
Thus,
$x - r^{-1}(y-x)\in\mathcal{K}$.

Combining both cases, we conclude that
\[
p_x(y-x)
\le
2\frac{1-\delta}{\delta},
\]
which completes the proof.
\end{proof}

\subsection{Gradient displacement inequality for self-concordant barriers}

Before proving Lemma~\ref{lemm:close-property}, we first recall the
gradient displacement inequality for self-concordant barriers.

\begin{lemma}[Theorem 5.1.8 in \cite{nesterov2018lectures}]
\label{lem:self-concordant-gradient-displacement}
Let \(\mathcal R\) be an $\nu$-self-concordant barrier on
\(\mathcal{K}\in \Real^{d}\). Then, for any
\(x,y\in \mathcal{K}\),
\[
    \left\langle
        \nabla \mathcal R(y)-\nabla \mathcal R(x),\, y-x
    \right\rangle
    \ge
    \frac{\|y-x\|_x^2}
         {1+\|y-x\|_x}.
\]
\end{lemma}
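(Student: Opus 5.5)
Lemma~\ref{lem:self-concordant-gradient-displacement} is a standard fact about self-concordant functions, so I would prove it by reducing to a one-dimensional self-concordance inequality along the segment from $x$ to $y$. Fix $x,y$ in the interior of $\mathcal K$ with $y \neq x$, and set $h=y-x$ and $x_\tau=x+\tau h$ for $\tau\in[0,1]$. Convexity of the interior keeps $x_\tau$ inside the domain. Define
\[
\phi(\tau)=h^{\top}\nabla^2\mathcal R(x_\tau)h=\|h\|_{x_\tau}^2 .
\]
The fundamental theorem of calculus then gives
\[
\langle \nabla\mathcal R(y)-\nabla\mathcal R(x),\,y-x\rangle=\int_0^1 \phi(\tau)\,d\tau,
\]
so it suffices to bound $\phi$ from below.

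First I would control how $\phi$ grows. Differentiating gives $\phi'(\tau)=D^3\mathcal R(x_\tau)[h,h,h]$. The first condition of Definition~\ref{def2} then yields $|\phi'(\tau)|\le 2\phi(\tau)^{3/2}$. Assuming $\phi>0$ (the Hessian of a barrier on a bounded set is nondegenerate), I pass to $\psi=\phi^{-1/2}$, for which $|\psi'(\tau)|=\tfrac12\phi^{-3/2}|\phi'|\le 1$. Hence $\psi(\tau)\le\psi(0)+\tau$. Writing $r=\|h\|_x=\phi(0)^{1/2}$, this says
\[
\|h\|_{x_\tau}\ \ge\ \frac{1}{1/r+\tau}=\frac{r}{1+\tau r}, \qquad\text{that is,}\qquad \phi(\tau)\ge \frac{r^2}{(1+\tau r)^2}.
\]
This lower bound holds for all $\tau\in[0,1]$ with no restriction that $r<1$, because it only uses the upper bound on $\psi$. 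That is convenient, since $y$ may lie far outside the unit Dikin ellipsoid.

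Finally I integrate:
\[
\int_0^1\frac{r^2}{(1+\tau r)^2}\,d\tau=r\Bigl[1-\frac{1}{1+r}\Bigr]=\frac{r^2}{1+r},
\]
which is exactly the claimed bound $\|y-x\|_x^2/(1+\|y-x\|_x)$.

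The only delicate step is the passage to $\psi=\phi^{-1/2}$. It needs $\phi(\tau)>0$ along the whole segment, and it needs $\psi$ to be differentiable there. I would justify positivity by noting that if $\phi(\tau_0)=0$ at some point, the differential inequality $|\phi'|\le2\phi^{3/2}$ forces $\phi\equiv0$ on the segment by a Gronwall-type uniqueness argument. In that case both sides of the claimed inequality vanish, since $r=0$, and the statement holds trivially. Alternatively, one can argue directly: $\nabla^2\mathcal R\succ0$ holds for a barrier on a set containing no lines, and a bounded $\mathcal K$ contains no lines. With positivity in hand, the rest of the argument is routine calculus.
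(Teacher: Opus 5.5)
Your proof is correct and follows the standard argument of the cited source (Nesterov, Thm.~5.1.8), which the paper invokes without reproving. You write the left-hand side as $\int_0^1 \|y-x\|_{x_\tau}^2\,d\tau$ and integrate the lower bound $\|y-x\|_{x_\tau}\ge r/(1+\tau r)$, which you obtain, as Nesterov does in the preceding theorem, from $|(\phi^{-1/2})'|\le 1$. Your handling of the degenerate case via $|\phi'|\le 2(\max_{[0,1]}\phi)^{1/2}\,\phi$ and Gronwall is also sound.
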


\subsection{Proof of Lemma~\ref{lemm:close-property}}
\begin{proof}
Since $x_t$ is the minimizer of a constrained convex optimization problem,
the first-order optimality condition gives
\[
    \left\langle
        \eta G_{t-1}+\nabla \mathcal R(x_t),
        x-x_t
    \right\rangle
    \ge 0,
    \qquad
    \forall x\in \mathcal K_\delta .
\]
Taking $x=x_{t+1}$, we obtain
\[
    \left\langle
        \eta G_{t-1}+\nabla \mathcal R(x_t),
        x_{t+1}-x_t
    \right\rangle
    \ge 0.
\]
Similarly, the optimality condition for $x_{t+1}$ gives
\[
    \left\langle
        \eta (G_{t-1}+g_t)+\nabla \mathcal R(x_{t+1}),
        x-x_{t+1}
    \right\rangle
    \ge 0,
    \qquad
    \forall x\in \mathcal K_\delta .
\]
Taking $x=x_t$, we have
\[
    \left\langle
        \eta (G_{t-1}+g_t)+\nabla \mathcal R(x_{t+1}),
        x_t-x_{t+1}
    \right\rangle
    \ge 0.
\]

Let
\[
    \Delta_t := x_{t+1}-x_t.
\]
Adding the two inequalities above yields
\[
    \left\langle
        \nabla \mathcal R(x_t)-\nabla \mathcal R(x_{t+1}),
        \Delta_t
    \right\rangle
    -
    \eta\langle g_t,\Delta_t\rangle
    \ge 0.
\]
Equivalently,
\[
    \left\langle
        \nabla \mathcal R(x_{t+1})-\nabla \mathcal R(x_t),
        \Delta_t
    \right\rangle
    \le
    \eta \langle g_t, x_t-x_{t+1}\rangle .
\]
By the Cauchy--Schwarz inequality with respect to the local norm,
\[
    \eta \langle g_t, x_t-x_{t+1}\rangle
    \le
    \eta \|g_t\|_{x_t}^{*}
    \|x_t-x_{t+1}\|_{x_t}.
\]
Thus,
\[
    \left\langle
        \nabla \mathcal R(x_{t+1})-\nabla \mathcal R(x_t),
        x_{t+1}-x_t
    \right\rangle
    \le
    \eta \|g_t\|_{x_t}^{*}
    \|x_{t+1}-x_t\|_{x_t}.
\]

On the other hand, by Lemma \ref{lem:self-concordant-gradient-displacement}, with
\[
    r_t := \|x_{t+1}-x_t\|_{x_t},
\]
we have
\[
    \left\langle
        \nabla \mathcal R(x_{t+1})-\nabla \mathcal R(x_t),
        x_{t+1}-x_t
    \right\rangle
    \ge
    \frac{r_t^2}{1+r_t}.
\]
Combining the two bounds gives
\[
    \frac{r_t^2}{1+r_t}
    \le
    \eta \|g_t\|_{x_t}^{*} r_t.
\]
If $r_t=0$, the claim is immediate. Otherwise, dividing both sides by $r_t$
gives
\[
    \frac{r_t}{1+r_t}
    \le
    \eta \|g_t\|_{x_t}^{*}.
\]
Let
\[
    s_t := \eta \|g_t\|_{x_t}^{*}.
\]
Then
\[
    r_t
    \le
    \frac{s_t}{1-s_t}.
\]
Since $s_t\le 1/2$, we have
\[
    \frac{s_t}{1-s_t}
    \le
    2s_t.
\]
Therefore,
\[
    \|x_{t+1}-x_t\|_{x_t}
    =
    r_t
    \le
    2\eta\|g_t\|_{x_t}^{*}.
\]
This completes the proof.
\end{proof}

\subsection{Proof of Lemma~\ref{lemm:barrier-hessian-inverse-bound}}

\begin{proof}
By the definition of $\mathcal{R}$, we have
$
    \nabla \mathcal{R}(x)
    =
    \frac{2x}{D^2-\|x\|^2}.
$
Therefore, the Hessian is given by
\[
    \nabla^2 \mathcal{R}(x)
    =
    \frac{2}{D^2-\|x\|^2} I
    +
    \frac{4}{\left(D^2-\|x\|^2\right)^2}xx^\top .
\]
Since
$
    xx^\top \succeq 0,
$
we obtain
\[
    \nabla^2 \mathcal{R}(x)
    \succeq
    \frac{2}{D^2-\|x\|^2}I .
\]
Hence,
\[
    \lambda_{\min}\!\left(\nabla^2\mathcal{R}(x)\right)
    \ge
    \frac{2}{D^2-\|x\|^2}.
\]
It follows that
\[
\begin{aligned}
    \lambda_{\max}\!\left(
        \left(\nabla^2\mathcal{R}(x)\right)^{-1}
    \right)
    &=
    \frac{1}{
        \lambda_{\min}\!\left(\nabla^2\mathcal{R}(x)\right)
    }  \\
    &\le
    \frac{D^2-\|x\|^2}{2}.
\end{aligned}
\]
Using
\[
    A_t^2
    =
    b^2\left(\nabla^2\mathcal{R}(x_t)\right)^{-1},
\]
and applying the above bound at $x=x_t$, we obtain
\[
    \|A_t^2\|_2
    \le
    \frac{b^2}{2}
    \left(D^2-\|x_t\|^2\right)
    \le
    \frac{b^2D^2}{2}.
\]
This completes the proof.
\end{proof}

\end{document}